\newcommand{\beq}{\begin{equation}}
\newcommand{\eeq}{\end{equation}}
\newcommand{\xtest}{x'}
\newcommand{\Xtest}{X'}
\newcommand{\mybf}{}
\newcommand{\algorithmicinput}{\textbf{Input }}
\newcommand{\INPUT}{\item[\algorithmicinput]}
\newcommand{\algorithmictrain}{\textbf{Training phase:}}
\newcommand{\TRAIN}{\item[\algorithmictrain]}
\newcommand{\algorithmictest}{\textbf{Test phase:}}
\newcommand{\TEST}{\item[\algorithmictest]}
\newcommand{\R}{\mathbb{R}}
\newcommand{\N}{\mathbb{N}}
\newtheorem{theorem}{Theorem}[section]
\newtheorem{corollary}[theorem]{Corollary}
\newtheorem{claim}[theorem]{Claim}
\begin{document}

\title{Diffusion Nets}

\author{Gal~Mishne, Uri~Shaham, Alexander~Cloninger and Israel~Cohen%
\thanks{G.~Mishne and I.~Cohen are with the Department of Electrical
Engineering, Technion - Israel Institute of Technology, Haifa 32000, Israel
(email: galga@tx.technion.ac.il; icohen@ee.technion.ac.il).
U.~Shaham is with Department of Statistics, Yale University, New Haven, CT 06520 (email: uri.shaham@yale.edu).
A.~Cloninger is with Department of Mathematics, Yale University, New Haven, CT 06520 (email: alexander.cloninger@yale.edu).
}
}

\maketitle

\begin{abstract}
Non-linear manifold learning enables high-dimensional data analysis, but requires out-of-sample-extension methods to process new data points. 
In this paper, we propose a manifold learning algorithm based on deep learning to create an encoder, which maps a high-dimensional dataset and its low-dimensional embedding, and a decoder, which takes the embedded data back to the high-dimensional space. 
Stacking the encoder and decoder together constructs an autoencoder, which we term a diffusion net, that performs out-of-sample-extension as well as outlier detection. 
We introduce new neural net constraints for the encoder, which preserves the local geometry of the points, and we prove rates of convergence for the encoder.  
Also, our approach is efficient in both computational complexity and memory requirements, as opposed to previous methods that require storage of all training points in both the high-dimensional and the low-dimensional spaces to calculate the out-of-sample-extension and the pre-image.
\end{abstract}
%


\section{Introduction}
\label{sec:intro}
Real world data is often high dimensional, yet concentrates near a lower dimensional manifold, embedded in the high dimensional ambient space.
In many applications, finding a low-dimensional representation of the data is necessary to efficiently handle it and the representation usually reveals meaningful structures within the data.
In recent years, different manifold learning methods have been developed for high dimensional data analysis, which are based on the geometry of the data, i.e. preserving distances within local neighborhoods of the data. 
These include kernel PCA~\cite{scholkopf1998nonlinear}, ISOMAP~\cite{Tenenbaum:2000}, locally linear embedding (LLE)~\cite{Roweis:2000}, Laplacian eigenmaps~\cite{Belkin2003}, Hessian eigenmaps~\cite{Donoho2003} and Diffusion maps~\cite{Coifman2006}.
Some of these methods are spectral methods, based on the eigenvectors of adjacency or affinity matrices of graphs on the data. 
These methods are capable of capturing a smooth representation of the data and have been shown to be robust to noise and outliers.
The ability to capture the underlying low-dimensional structure of data has made these methods appropriate for dimensionality reduction.
Unlike classical dimensionality reduction methods, such as principal component analysis (PCA), these methods are nonlinear, which is essential as real-world data typically does not lie on a hyperplane. 
In addition, they preserve local structures in the data, while disregarding distances between points that are far apart, which are typically meaningless in high-dimensional data. 
These approaches are very popular in machine learning, signal processing and data mining applications.

When the data set is very large, or when processing online sequential data, it is impractical to directly compute an embedding for the entire dataset.
The computational complexity of calculating the affinity matrix and the eigen-decomposition of the matrix become infeasible in terms of memory and running-time.
Since these non-linear techniques do not provide an explicit mapping from the data to the embedding, out-of-sample extension (OOSE) methods are used to extend the embedding to new data points~\cite{Bengio2003,Fowlkes:2004,Coifman2006a,Lafon2006,laplacianpyramid,PascualRD13Arxiv,Mousazadeh2015,Aizenbud2013}.
In such cases, the low-dimensional embedding is constructed for a representative sample of the data and is then extended to all remaining, or new points.
This is a common approach in image processing applications, for example, especially for high-resolution images~\cite{He2009,Farbman:2010}.

Deep learning has gained popularity in the past years, achieving state-of-the-art results in machine learning, computer vision and speech processing applications, handling increasingly large datasets~\cite{Hinton2006,Hinton2006b,Bengio:2009}. 
Deep neural nets are capable of learning increasingly abstract representations for the data~\cite{Bengio2013}, some of which are robust to small perturbations around training points~\cite{Vincent2008,Rifai2011}.
However, these representations are built globally, without incorporating local geometry or density of the data.  Jia et al.~\cite{Jia2015} recently introduced Laplacian Autoencoders, which impose locality preserving constraints via the weighted affinity matrix. 
Their goal is to improve pre-training of autoencoders in constructing neural networks. 
However, their formulation lacks both a parameter to balance between the reconstruction loss and the affinity regularizer, and an embedding to compare to, so there is no way to ensure that the final representation is anywhere near the true manifold eigenfunctions. 
Because of this, there are no theoretical guarantees on the convergence of the algorithm or on the usefulness of the representations.

In this paper, we propose a new approach to out-of-sample extension, applying a deep neural network to learn the mapping between the data and the embedding. 
We employ deep learning from a manifold learning perspective, by explicitly incorporating a manifold embedding of the data in the deep learning framework.
We address three closely connected problems: OOSE of the embedding to new points, a pre-image solution~\cite{Kwok2004,Arias2007} that can include regularization, and outlier detection on test data.
This third goal is important and often neglected in OOSE methods. 
If new data is an outlier and does not fit the model of the training data, the extension is ill-defined and its new representation will be insufficient.

To accomplish these three goals, we propose to train a neural network-based encoder and decoder, and combine them in an autoencoder. 
First, we train a multi-layer encoder to approximate the low-dimensional embedding on the data; specifically we use diffusion maps~\cite{Coifman2006} for embedding the data.
The encoder performs OOSE of the embedding.
We then train a multi-layer decoder whose input is the diffusion map, to learn the inverse mapping between the embedding and the data.
The decoder enables to recover the pre-image of the embedding, mapping new points in the diffusion space to the high-dimensional data space.
Finally, by stacking the two networks, we obtain an autoencoder, termed the diffusion net.
The diffusion net enables to perform outlier detection, indicating when the extension of a given point via the encoder is faulty due to its being an outlier that does not follow the model of the data.
The diffusion net also performs denoising of the data, reconstructing a clean version of noisy data.
Our approach is both computationally efficient and has low memory costs.
Once the diffusion net has been trained, it is unnecessary to retain the training data and embeddings as required in other methods.
Thus, harnessing the efficiency of deep learning networks enables to efficiently process large quantities of data. 

This paper is organized as follows. Section~\ref{sec:background} provides background on manifold learning and deep neural networks.
In Sections~\ref{sec:diffnet} and ~\ref{sec:implement} we propose a deep learning approach for manifold learning, enabling out-of-sample extension, pre-image computation and outlier detection.
A proof on bounding the convergence of a single layer encoder is presented in Section~\ref{sec:lemma}.
We present experimental results in Section~\ref{sec:results}.
Future research directions are discussed in Section~\ref{sec:future}.

\section{Background and Related Work}
\label{sec:background}
\subsection{Diffusion Maps}
\label{subsec:diffusion}
Diffusion maps is a popular manifold learning technique, based on the construction of the graph Laplacian of the data set~\cite{Coifman2006a}. 
It has been used successfully in various signal processing, image processing and machine learning applications~\cite{Singer:2009,Talmon2012,Lafon2006,Farbman:2010,David2012,Gepshtein:2013,Mishne2013,Haddad2014,Coifman:2014}.
In this section, we briefly review the diffusion maps construction. 
For a detailed discussion on diffusion maps, see~\cite{Coifman2006}.

Given a high dimensional set $X=\{x_i\}_i$, a weighted graph is constructed with the data points as nodes and the weight of the edge connecting two nodes, $k(x_i,x_j)$, $x_i, x_j \in X$, as a measure of the similarity between the two data points. 
The affinity matrix $\mybf{K}[i,j]=k(x_i,x_j)$ is required to be symmetric and non-negative, where a common choice is a radial basis function (RBF) kernel 
\beq
\label{eq:kernel}
k(x_i,x_j)=\exp\left\{-\Vert x_i-x_j\Vert^2/\sigma^2\right\},
\eeq
where $\sigma>0$ is a global scale parameter.
A local scale can be set for each point as in~\cite{selftune}.
In practice, $\mybf{K}$ can be computed using only the nearest neighbors of every point and $\mybf{K}[i,j]$ is set to zero for $x_j$ that are not among the nearest neighbors of $x_i$.

We apply a normalization of the data to obtain an approximation of the Laplace-Beltrami operator on the data, so the embedding will not rely on the distribution of the points~\cite{Coifman2006,Lafon2006}.
The kernel is normalized by the degree of each point $\mybf{D}[i,i]=\sum_{j\in X}k(x_i,x_j)$, by
\begin{equation*}
\label{eq:ln_norm}
\mybf{\widetilde{K}} = \mybf{D}^{-1}\mybf{K}\mybf{D}^{-1}=\sum_{j}\mybf{\widetilde{K}}[i,j].
\end{equation*}
A random walk is then created on the normalized data set by:
\begin{equation}
\label{eq:random_walk}
\mybf{P} = \mybf{\widetilde{D}}^{-1}\mybf{\widetilde{K}}, \;\;\; \mybf{\widetilde{D}}(i,i)=\sum_{j}\mybf{\widetilde{K}}[i,j].
\end{equation}
The row-stochastic matrix $\mybf{P}$ satisfies $\mybf{P}[i,j]\geq0$ and $\sum_{j\in X}\mybf{P}[i,j]=1$ and therefore can be viewed as the transition matrix of a Markov chain on the data set $X$. 
The eigen-decomposition of $\mybf{P}$ yields a sequence of biorthogonal left and right eigenvectors, $\phi_\ell$ and $\psi_\ell$ respectively, and a sequence of positive eigenvalues: $1 = |\lambda_0|\geq|\lambda_1|\geq ...$.
Then, $t$ steps of the Markov chain can be calculated as 
\begin{equation*}
\label{eq:eigen_decompose}
p_t(x_i,x_j)=\sum_{l\geq0} \lambda^t_\ell\psi_\ell(x_i)\phi_\ell(x_j).
\end{equation*}

A diffusion distance $d_{\textrm{DM}}(x_i,x_j;t)$ between two points $x_i,x_j \in X$ is defined by
\begin{equation}
\label{eq:diffusion_distance1}
d_{\textrm{DM}}(x_i,x_j;t ) = \sum_{x_k\in X}\frac{\big(p_t(x_i,x_k)-p_t(x_j,x_k)\big)^2}{\phi_0(x_k)} = \sum_{l\geq1} \lambda^{2t}_\ell(\psi_\ell(x_i)-\psi_\ell(x_j))^2.
\end{equation}
where $\phi_0$ is the stationary probability distribution on the graph.
This metric is robust to noise, since the distance between two points depends on all possible paths of length $t$ between the points.
Due to the spectrum decay of $\mybf{P}$, the diffusion distance can be approximated using only the first $d$ eigenvectors.
Equation~(\ref{eq:diffusion_distance1}) implies that a mapping can be defined between the original space and the eigenvectors $\psi_\ell$.
Retaining only the first $d$ eigenvectors, the mapping $\Psi_t$ embeds the data set $X$ into the Euclidean space $\mathbb{R}^{d}$, where the diffusion distance is equal to the Euclidean distance in this new embedding:
\begin{equation}
\label{eq:diffusion_map}
\Psi_t:x_i\rightarrow \big( \lambda_1^t\psi_1(x_i), \lambda_2^t\psi_2(x_i),..., \lambda_{d}^t\psi_{d}(x_i)\big)^T .
\end{equation}
Note that $\psi_0$ is not used in the embedding because it is a constant vector.
In this paper we set $t=1$, but our approach can be used for estimating the embedding for general $t$.

\subsection{Out-of-sample Function Extension}
\label{subsec:extend}
Having calculated a diffusion map $\Psi$ on the data $X$, various methods have been proposed for extending $\Psi$ to new points.
In simple examples, there are analytic mechanisms for creating harmonic extensions when the eigenfunctions of the Laplacian can be derived analytically.
However, this is not applicable in the general case.
The Nystr\"{o}m extension method is a popular method for general OOSE. 
Given a new point $\xtest \in \mathcal{M} \setminus X$, the eigenvector $\psi_\ell$ is extended to this point as:
\beq
\label{eq:nystrom}
 \widehat{\psi}_\ell(\xtest) = \frac{1}{\sqrt{\lambda_\ell}} \sum_{j=1}^m p(\xtest,x_j){\psi}_\ell(x_j), \;\;\; \ell = 1, \cdots, d.
\eeq
Geometric Harmonics~\cite{Coifman2006a,Lafon2006} is an OOSE method which improves upon the Nystr\"{o}m extension method.
It treats both the numerical instability due to $\lambda$ by extending only the eigenvectors with significant eigenvalues.
In addition, it finds an appropriate scale for the kernel in the extension, dependent on the function that is being extended.
Rabin and Coifman proposed a Laplacian pyramids-based OOSE method in~\cite{laplacianpyramid}. 
The eigenvectors are extended in an iterative multi-scale scheme, where the number of scales is adapted to the complexity of each eigenvector separately. 
This approach was recently extended in~\cite{PascualRD13Arxiv} to implicitly incorporate cross validation in the training procedure and avoid over-fitting in the training.
Aizenbud, Bermanis and Averbuch~\cite{Aizenbud2013} introduced an extension method based on a generalized least squares solution for each new test point within its local neighborhood in the training set. 
This solution is shown to minimize the Mahalanobis distance between the embedding of the training points and the estimated embedding for the new point, in respect to a covariance matrix that incorporates geometric properties of the data and embedding.

The computational complexity of these methods typically depends on the number of training points, since these methods rely on the distances between a new test point and all training points, or its nearest neighbors in the training set (determined by a nearest neighbor search algorithm).
Therefore, to perform OOSE, it is necessary to keep all the training points and their corresponding embeddings in memory.
If the affinity matrix $\mybf{K}$ in~(\ref{eq:random_walk}) is based on non-Euclidean local metrics, such as~\cite{Singer2008, Talmon2012b, Talmon2013, Haddad2014,Mishne2014b}, it is not possible to use nearest neighbors search, since each training point is associated with its own local metric.
This increases the complexity of the distance calculation and adds to the memory requirements of the OOSE.
The method we propose has no such requirement.
After training, our approach enables OOSE without retaining any of the training data or embeddings.

\subsection{Deep Neural Networks}
\label{subsec:deeplearn}
Artificial neural networks (ANNs) are networks composed of interconnected computational units termed \emph{neurons}, which are typically organized in layers.
A deep neural network is composed of multiple hidden layers, and is typically designed as a feed-forward  network, in which there are no cycles or loops.
In our framework, we use multi-layer perceptrons (MLP), which are a popular and important class of neural nets, in which the layers are densely connected.
The output of each layer is computed as an affine mapping of the previous layer followed by a non-linear function:
\begin{equation*}
a^{(l+1)} = \sigma(\mybf{W}^{(l)}a^{(l)}+b^{(l)}),
\end{equation*}
where $a$ is termed the activation, $\mybf{W}^{(l)}[i,j]$ denotes the weight associated with the connection between unit $j$ in layer $l$ and unit $i$ in layer $l+1$, $b^{(l)}$ is a bias vector, $\sigma(\cdot)$ is a non-linear function applied element-wise, and $a^{(1)} = X$. 
We denote the number of hidden units in layer $l$ by $s_l$ and the overall number of layers in a network, including the input and output, by $L$.
In our experiments, we used a sigmoid non-linearity in the activation: $\sigma(z) = \frac{1}{1+e^{-z}}$.
Other choices include $\sigma(z)=\tanh(z)$ and rectified linear units: $\sigma(z)=\textrm{ReLU}(z)=\max\{0,z\}$.

Deep nets have been successfully applied to various tasks such as regression for learning a function over a given dataset, classification, feature learning, etc., achieving state-of-the-art results.
The task the network performs is determined by the output layer and the cost function minimized over the network.
In supervised learning, the goal is to predict a function or labels on the input data.
The cost function of a network consists of a loss function, and a weight regularization term is usually added in order to prevent over-fitting.
Given a multi-layer network, we denote the weights and biases of all the layers by $\theta = \{\mybf{W}^{(l)}, b^{(l)} \}_l$.
For regression of a multi-dimensional function, $y \in \mathbb{R}^d$, as in our application, the squared error can be used for the loss term:
\begin{equation}
\label{eq:regcost}
J(\theta) = \frac{1}{2m}\sum_{i=1}^m \Vert o(x_i;\theta) - y_i \Vert^2  + \frac{\mu}{2} \sum_{l=1}^{L-1}  \Vert \mybf{W}^{(l)} \Vert_F^2, 
\end{equation}
where $o(x_i)\in \mathbb{R}^d$ is the output of the net for input $x_i$, $\Vert \cdot \Vert_F^2$ is the Frobenius norm, $\mu$ is a cost parameter.
We use an $l_2$ penalty on the weights, which is a very popular choice for regularization.
Neural nets are typically trained using variants of stochastic gradient descent (SGD) for calculating the weight and biases in the network that minimize the cost function. 
The gradient of the loss function relative to the weights is computed efficiently using backpropagation~\cite{Rojas1996}, starting from the output layer backwards to the input.

\subsubsection{Autoencoders}
Deep learning can also be used in an unsupervised manner, such as in training autoencoders.
An autoencoder is composed of an encoder function $f(\cdot)$ and a decoder function $g(\cdot)$, where the dimension of $f$ is typically smaller that the dimension of the input data. 
The reconstruction of an input $x_i$ is given by stacking the decoder on the encoder: $r(x_i) = g(f(x_i))$, and the autoencoder is trained to minimize a reconstruction error loss $L(x, r(x))$ over the training points, i.e. trying to approximate the identity function.
This is performed by setting the number of output units of the decoder to be the dimension of the input data, and fine-tuning a loss function between the output and the input.
The output of the encoder, i.e. the middle layer of the autoencoder, can be seen as a low-dimensional representation of the data.
Autoencoders were popularized by Hinton~\cite{Hinton2006b}.
Basic autoencoders consist of a single hidden layer~\cite{Vincent2008,Rifai2011}, but deep autoencoders have also been proposed for classification, denoising, image retrieval, speech enhancement and more~\cite{Hinton2006b, Vincent2008}.

\section{Diffusion Net}
\label{sec:diffnet}
\subsection{Problem Setup}
We assume the data lies on a smooth, compact, $d$-dimensional Riemannian manifold $\mathcal{M}$ embedded in a high-dimensional space $\mathbb{R}^n$, ($d<n$).
Calculating the diffusion map $\Psi : X \rightarrow \mathbb{R}^d$~(\ref{eq:diffusion_map}) for a given training set $X=\{x_i\}_{i=1}^m$ maps the high-dimensional space to Euclidean space $\mathbb{R}^d$, revealing the low-dimensional structure of the data.
We address three related problems in this setting: (a) Out-of-sample extension, (b) pre-image solution and (c) outlier detection.

Given new test points $\Xtest=\{\xtest\} \subseteq \mathcal{M} \setminus X$, the purpose of an out-of-sample extension method is to extend $\Psi$ to the new points.
For the given training points, the extension should be as close as possible to the true embedding: $\Vert \widehat{\Psi}(x_i)- \Psi(x_i)\Vert_2 < \epsilon$, for small $\epsilon$.
In addition, the extension $\widehat{\Psi}(\xtest)$ should preserve the properties of the original embedding, such as preserving local structures in the data.

The second problem we address is calculating the pre-image~\cite{Kwok2004,Arias2007}. 
Given a point $\varphi$ in the diffusion space, the pre-image of $\varphi$ is a data-point $x$ for which $\Psi(x) = \varphi$.
The pre-image has been shown to be closely related to the OOSE problem~\cite{Arias2007}.
Calculating the pre-image is essential to being able to pull back calculations made in the embedding space to the data space.

Finally, we aim to provide a new outlier detection measure, to detects outlier in newly arrived data.
In OOSE algorithms, the ability to provide a good embedding for new points depends on how well they follow the model of the training data, i.e. their distance from the training data.
For example in kernel-based extension methods such as Nystr\"{o}m, the scale of the Gaussian kernels limits how close points need to be to the training data. 
Denote by $x^* = \underset{x_i \in X}{\arg \min} \Vert \xtest - x_i \Vert$ the nearest-neighbor of a test point $\xtest$ within the training set $X$.
If the distance between the two is very large, $\Vert \xtest - x^* \Vert \gg \sigma$, then $\widehat{\Psi}(\xtest) = \Psi(x^*)$, i.e. the OOSE reverts to 1-nearest neighbor prediction.
Numerically, if $\Vert \xtest - x^* \Vert / \sigma \rightarrow 0$, then the affinity to the training set evaluates to zero, and the OOSE is the origin.
In related works (e.g.~\cite{Fowlkes:2004,Coifman2006a,Lafon2006,laplacianpyramid}), there is typically an implicit assumption that the test data follows the distribution of the training data.
However, the embedding obtained for outliers is uninformative and can lead to mistakes in the task the embedding is used for, i.e. classification, signal processing, etc.
Therefore, indicating if a point is an outlier within an OOSE framework is important to properly processing the data.

\subsection{Our Approach}
We propose a solution to these problems based on deep neural nets, training a neural network-based encoder and decoder, and combining them in a deep multi-layer autoencoder. 
Thus, instead of training a general autoencoder in which the middle layer can be seen as providing a low-dimensional representation of the data, we incorporate the given embedding on the data in the training procedure.
The encoder and the decoder are both MLPs composed of $L$ layers, whose output layer is a regression layer.
In our approach, the encoder learns a mapping from the manifold to the diffusion space, by minimizing the squared loss between the output of the encoder and $\Psi(x)$. 
In addition, we impose a new constraint to preserve properties of the embedding.
The decoder learns the inverse mapping from the low-dimensional diffusion space back to the high-dimensional space of the data, solving the pre-image problem.
By stacking the encoder and decoder we obtain an autoencoder, whose inner-most layer computes the diffusion embedding of the data. 
The autoencoder can be used for both outlier detection and denoising.
Our framework is presented in Algorithms~\ref{alg:encoder},~\ref{alg:decoder} and~\ref{alg:ae}. 
\begin{algorithm}[ht]
\caption{Out-of-sample Extension}
\label{alg:encoder}
\algsetup{indent=2em}
\begin{algorithmic}[1]
\item[\textbf{Encoder} (Section~\ref{subsec:encoder})]
\TRAIN
\INPUT Training set $X$, diffusion embedding $\Psi$, number of layers for encoder $L$.
\STATE Initialize the weights $\theta^e = \{\mybf{W}^{(l)},b^{(l)}\}_l$ of the encoder and set $a^{(1)} = X$, $y = \Psi$.
  \STATE Optional: pre-train every hidden layer in every stack as an autoencoder. (Section~\ref{subsec:pretrain})
 \STATE Optimize the parameters of the network by fine-tuning cost (\ref{eq:encoderEV}) with back-propagation. (Section~\ref{subsec:opt})
\TEST
\INPUT test data $\Xtest$	
\STATE Calculate the out-of-sample-extension to new points as the output of the encoder $\widehat{\Psi}(\xtest)= o^e(\xtest)$.
\end{algorithmic}
\end{algorithm}

\begin{algorithm}[ht]
\caption{Pre-Image}
\label{alg:decoder}
\algsetup{indent=2em}
\begin{algorithmic}[1]
\item[\textbf{Decoder} (Section~\ref{subsec:decoder})]
\TRAIN
\INPUT Training set $X$, diffusion embedding $\Psi$, number of layers for decoder $L$.
\STATE Initialize the weights $\theta^d=\{\mybf{W}'^{(l)},b'^{(l)}\}_l$ of the decoder and set $a^{(1)} = \Psi$, $y = X$.
  \STATE Optional: pre-train every hidden layer in every stack as an autoencoder.
 \STATE Optimize the parameters of the network by fine-tuning cost (\ref{eq:decoder}) with back-propagation.
 \TEST
 \INPUT test points in the embedding space $\{\varphi\}$	
\STATE Calculate the pre-image of the test points as the output of the decoder $\widehat{x}= o^d(\varphi)$.
 \end{algorithmic}
\end{algorithm}

 \begin{algorithm}[ht]
\caption{Autoencoder}
\label{alg:ae}
\algsetup{indent=2em}
\begin{algorithmic}[1]
 \item[\textbf{Autoencoder} (Section~\ref{subsec:autoencoder})]
\TRAIN
 \STATE Stack decoder trained as in Alg.~\ref{alg:decoder} on top of encoder trained as in Alg.~\ref{alg:encoder} and obtain autoencoder with $2L-1$ layers.
  \STATE Calculate average reconstruction error $\epsilon$.
 \TEST
 \INPUT test data $\Xtest$
\STATE Calculate reconstruction of the point as the output of the autoencoder $\widehat{\xtest}=r(\xtest)$.
\STATE Calculate an outlier detection score as threshold on the reconstruction cost $\Vert r(\xtest) -\xtest \Vert > C\epsilon$.
\end{algorithmic}
\end{algorithm}

\subsection{Encoder}
\label{subsec:encoder}
The encoder learns the mapping between the high-dimensional data space and the embedding space.
The encoder is designed as an MLP, minimizing the $L_2$ loss between the diffusion map $\Psi$ and the output of the net, i.e. $o^e(x_i)=\Psi(i)$ in~(\ref{eq:regcost}), where $o^e$ denotes the output matrix of the encoder for all training points, $o^e\in\mathbb{R}^{d\times m}$.
To facilitate the output of the encoder approximating the diffusion map, we add a new constraint to the training objective of the encoder.
Since the coordinates of the diffusion map are eigenvectors of the random walk matrix on the data,$\mybf{P}$ in~(\ref{eq:random_walk}), we add a new cost term on the output of the encoder being an eigenvector of this matrix.
The additional term, termed the eigenvector (EV) constraint, is
\begin{equation}
\label{eq:evconstraint}
J^{\textrm{EV}} = \frac{\eta}{2m}\sum_{j=1}^{d} \Vert (\mybf{P} - \lambda_j \mybf{I}_{m\times m}) (o^e_j)^T \Vert^2,  
\end{equation}
where $o^e_j$ is the $j$-th row of the output matrix, $\lambda_j$ is the $j$-th eigenvalue of the affinity matrix, and $\eta$ is an optimization cost parameter.

This new term imposes smooth locality on the training points, and maintains the local geometry of the points in the encoder function. 
In contrast to general constraints in deep learning, this constraint ``mixes'' output values of different data points together, via the random-walk matrix.
The EV constraint enforces that an output value for a given data point can be reconstructed as a weighted average of the outputs of the local neighborhood of the data point.
The locality of the neighborhood depends on the locality of the kernel used to define the affinity matrix~(\ref{eq:kernel}). 
This constraint serves to minimize the loss error on the output, such that the output is not a general regression of the desired function, but an eigenvector of the matrix, thus maintaining the geometric properties of the embedding.
Note that for a high value of $\eta$, this constraint forces the output to the origin, which is a trivial solution to minimizing this constraint.

Although we use diffusion maps for manifold learning, other kernel methods may be used and imposed with the EV constraint.
This only requires replacing the matrix $\mybf{P}$ with the matrix that is decomposed in other approaches, for example the normalized affinity matrix $\mybf{D}^{-1/2}\mybf{K}\mybf{D}^{-1/2}$ as in spectral clustering \cite{Weiss1999} or the unnormalized graph Laplacian matrix $\mybf{L}=\mybf{D}-\mybf{K}$ as in Laplacian Eigenmaps~\cite{Belkin2003}.

Adding the EV constraint to the cost function of the encoder, the total cost is:
\begin{equation}
\label{eq:encoderEV}
J^e(\theta^e) = \frac{1}{2m}\sum_{i=1}^m \Vert o^e(x_i) - \Psi(x_i) \Vert^2  + \frac{\mu}{2} \sum_{l=1}^{L-1}  \Vert \mybf{W}^{(l)} \Vert_F^2 
 + \frac{\eta}{2m}\sum_{j=1}^{d} \Vert (\mybf{P} - \lambda_j \mybf{I}) (o^e_j)^T \Vert^2,
\end{equation}
where $\theta^e = \{\mybf{W}^{(l)},b^{(l)}\}_l$ denotes the set of the weights and biases of all the layers of the encoder.
To incorporate this constraint in the gradient calculation in back-propagation, the gradient of this constraint in regards to the output layer is:
\begin{equation*}
\label{eq:ev_backprop}
\nabla J^{\textrm{EV}} = \frac{\eta}{m} \sum_{j=1}^d o^e_j \left( \mybf{P}^T - \lambda_j \mybf{I} \right) \left( \mybf{P} - \lambda_j \mybf{I} \right). 
\end{equation*}

For new data points, the encoder performs out-of-sample extension, based on the mapping learned from the training points. 
This extension is computationally efficient as it relies on a few affine transforms and non-linear element-wise functions.
In addition, once trained, it does not depend at all on the training data, making it efficient also in terms of memory.

\subsection{Decoder}
\label{subsec:decoder}
The decoder has the same architecture as the encoder, only in reverse.
The input layer is $a^{(1)} = \Psi$, with size $s_1=d$, and $y=X$ with size $s_{L}=n$.
There are no new constraints in the decoder cost, and it consists of a loss term and weight regularization term as in~(\ref{eq:regcost}):
\begin{equation}
\label{eq:decoder}
J^d(\theta^d)  = \frac{1}{2m}\sum_{i=1}^m  \Vert 
o^d(i) - x_i \Vert^2
 + \frac{\mu}{2} \sum_{l=1}^{L-1} \Vert \mybf{W}'^{(l)} \Vert_F^2 , 
\end{equation}
where where $\theta^d = \{\mybf{W}'^{(l)},b'^{(l)}\}_l$ denotes the set of the weights and biases of all the layers of the decoder, and $o^d(i)\in \mathbb{R}^n$ is the output for input  $\Psi(x_i)$.
Note that the decoder weights $\{\mybf{W}'^{(l)}\}_l$ are not tied to those of the encoder $\{\mybf{W}^{(l)}\}_l$, therefore the number of units in the hidden layers can be different than for the encoder.
Enforcing tied weights in the autoencoder will be explored in future work.

The decoder solves the pre-image problem and has several interesting applications.
As the decoder learns the non-linear inverse mapping between the data and the diffusion space, it enables to output points in the data space given new points in the diffusion space.
This enables data visualization, performed by covering the diffusion space with new points, and inputting these points to the decoder.
This also enables to perform data augmentation for increasing the training set in machine learning applications.
Another benefit of the decoder is the ability to perform calculations in the embedding space and then pull back to the data-space, for example, calculating centroids in a clustering application, or interpolation of points in the embedding space.
In our current formulation we do not impose constraints on the decoder, however, in future work we will explore including a PDE regularizer in the cost function, such as a harmonic constraint.
This will enable recovering interesting surfaces, such as ``minimal surfaces'' from the embedding.

\subsection{Autoencoder}
\label{subsec:autoencoder}
Having trained the encoder and the decoder, the two networks are stacked together to obtain an autoencoder.
This architecture is displayed in Figure~{\ref{fig:network}}. 
\begin{figure*}[t]
\centering
	\includegraphics[width=0.8\linewidth]{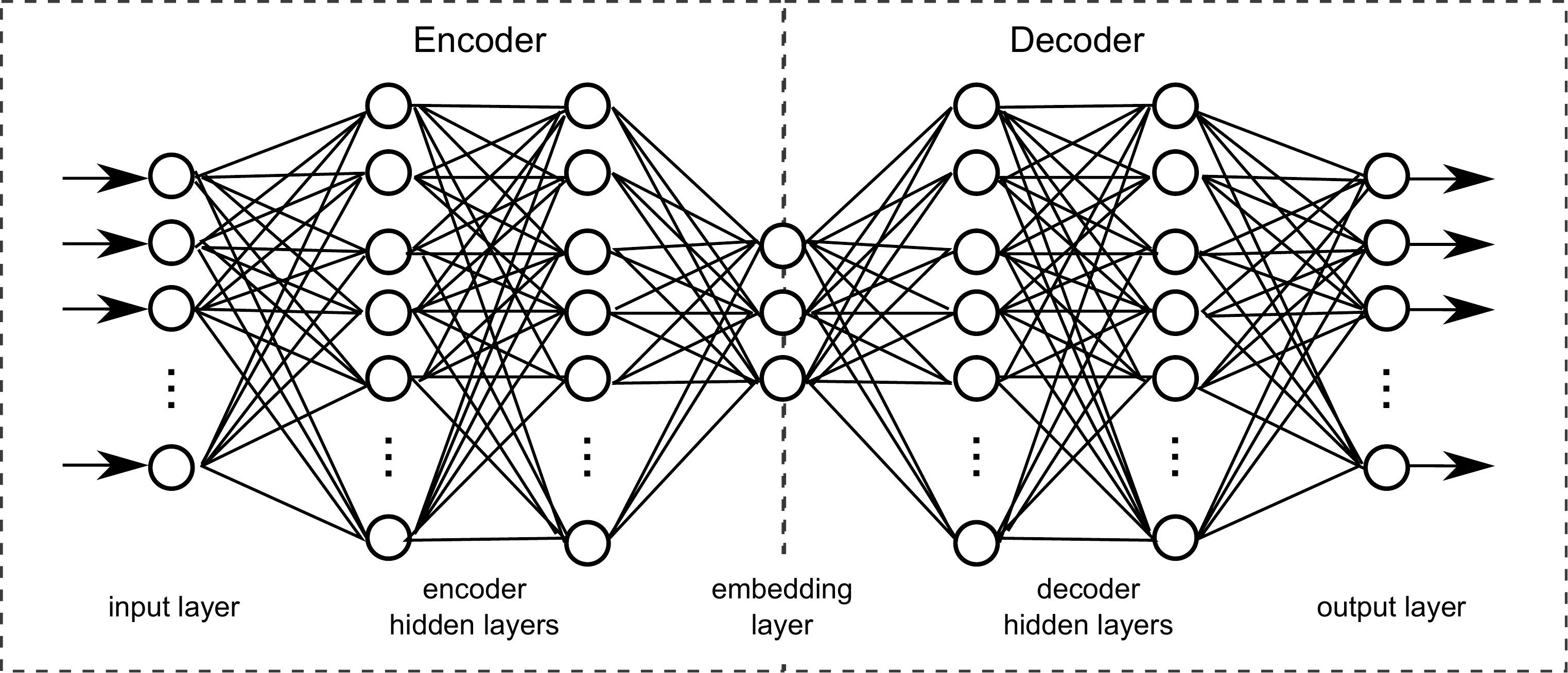}
	\caption{Autoencoder. Left: encoder with two hidden layers, Right: decoder with two hidden layers. The output of the encoder is a prediction of the diffusion map. The output of the decoder is a reconstruction of the data.}
		\label{fig:network}
\end{figure*}
The network is composed from two stacks, one for the encoder and one for the decoder. 
In most of our experiments each stack has 2 hidden layers, and in some we use simpler stacks of a single hidden layer. 

One application of the autoencoder is to use the autoencoder reconstruction error to detect outliers in the data.
Denoting the output of the autoencoder by $r$, the training data provides an average reconstruction error:
\begin{equation}
\label{eq:ae_MSE}
\epsilon = \frac{1}{m} \sum_{i=1}^m \Vert x_i - r(x_i)\Vert^2.
\end{equation}
This enables setting a threshold on new test points to determine whether they fit the model of the data learned by the autoencoder.
If the reconstruction error for a new point $\xtest$ is greater than this threshold, the point is considered an outlier and its OOSE can be disregarded.
This condition is given by
\begin{equation*}
\label{eq:outlier}
\Vert r(\xtest) - \xtest \Vert > C\epsilon,
\end{equation*}
where $C$ is a constant determined by the user and the training data.
Performing outlier detection within the framework of OOSE was previously addressed in~\cite{Aizenbud2013}, where a Mahalanobis distance in the embedding space was used for anomaly detection.
Our approach differs in that our measure depends on the reconstruction of the point in the data space and not in the embedding space.

A second application of the autoencoder is denoising.
When applied to noisy data, the diffusion map recovers a smooth manifold.
Noise in the data, which relates to the relaxation time in the diffusion process, is typically manifested in eigenvectors relating to small eigenvalues~\cite{Singer:2009}.
These are are usually disregarded when choosing the number of dimensions to retain in the embedding.
Thus, noisy points which relate to the same original clean point are embedded at identical coordinates, and this mapping is learned by the encoder.
The decoder, which learns the inverse mapping from the embedding to the data, can only recover a smooth version of the data.
Variations in the recovered data are such that they are along the low-dimensional manifold, whereas variations due to noise, that are not "seen" by the manifold, are suppressed.
Thus, inputting new noisy points into the autoencoder, outputs a clean version of the points.
Vincent et al.~\cite{Vincent2008} provide an interpretation of their denoising autoencoder based on the manifold assumption, however they do not explicitly incorporate the manifold in their training procedure.
In addition, they are intentionally corrupting the data by adding noise to the input and then training the network to recover the clean data.
We do not assume the data is clean, but that the embedding provides a smooth representation of the data.

\section{Practical Considerations}
\label{sec:implement}
This section provides implementation details regarding the optimization of the diffusion net and the computational complexity of our approach.
\subsection{Pre-training}
\label{subsec:pretrain}
In general neural nets, a greedy layer-wise pre-training procedure was first developed by Hinton, Osindero and Teh~\cite{Hinton2006} for Restricted Boltzman Machines, and later extended to autoencoders~\cite{Bengio:2009}.
This procedure was shown to improve the optimization of the neural network.
In this approach, each layer is first trained in an unsupervised manner as an autoencoder whose input is the output of the previous layer. 
This enables to ``initialize'' the parameters $\theta = \{ \mybf{W}^{(l)},b^{(l)}\}_l$ of the network before performing backpropagation, termed fine-tuning, over the complete network for the supervised learning task.
It was shown empirically that this procedure yields improved results compared to initializing with random weights which can get stuck in poor local minimum solutions.

Recently, deep networks trained using very large quantities of labeled data have achieved successful results, with pre-training having very little impact, if at all. 
It appears that employing large amounts of data in the training cancels the advantage gained by pre-training~\cite{Bengio2013}.
In our case, the amount of the data used to train the network is not large, therefore pre-training is beneficial.

We pre-train every hidden layer as a denoising autoencoder~\cite{Vincent2008} with a sparsity term.
This term encourages the average activation of every hidden unit to be small so that the hidden representation of the data is sparse~\cite{Le2011}. 
The loss function is given by
\begin{equation}
\label{eq:sparsity}
J(\theta) = \frac{1}{2m}\sum_{i=1}^m \Vert \hat{\widetilde{x}}_i - x_i \Vert^2  
+ \frac{\mu}{2} \sum_{l=1}^{2} \Vert \mybf{W}^{(l)} \Vert_F^2 
+  \beta \sum_{j=1}^{s_1} \textrm{KL}(\rho \Vert \hat{\rho}_j),
\end{equation}
where $\theta = {\mybf{W}^{(1)},\mybf{W}^{(2)},b^{(1)},b^{(2)}}$.
The input point is $x_i$, $\widetilde{x}$ is $x$ corrupted by setting a random subset of entries to zero and $\hat{\widetilde{x}}_i$ is the reconstruction of $x_i$ by the autoencoder.
The third term is the sparsity loss where where $s_1$ is the number of hidden units, $\rho$ is a sparsity parameter, typically a small value close to zero, $\hat{\rho}_j$  is the average activation of hidden unit $j$ and $\textrm{KL}(\rho \Vert \hat{\rho}_j)$ is the Kullback-Leibler divergence between the probability mass functions of Bernoulli random variables with parameters $\rho$ and $\hat{\rho}_j$.
This constraint imposes that each hidden units ``responds" only to specific input patterns in the data.

\subsection{Optimiztion}
\label{subsec:opt}
The cost function in neural networks is highly non-convex. 
However, convex optimization methods are used, as typically a local minimum yields good results.
We perform training with Limited memory BFGS (L-BFGS)~ \cite{Wright1999,Le2011} with line search procedure, a quasi-Newton method implemented in Matlab's \textit{minFunc} package, and the gradients are computed using standard back-propagation~\cite{Rojas1996}.
L-BFGS is a suitable approach when the dataset is not too large, and even out-performs SGD, which is a more popular approach in deep learning, in certain settings~\cite{Le2011}.

The weights and biases in all layers are initialized with random values from a normal distribution with zero mean and small variance.
The weight regularization parameter $\mu$ was set to be small, i.e. order of $10^{-7}$ or  $10^{-10}$.
In the experimental results in Section~\ref{sec:results} we examine the performance of the new constraint~(\ref{eq:evconstraint}) for different values of $\eta$.

\subsection{Mini-batch training}
Mini-batch training is typically used with large datasets in deep learning and has been shown to improve performance.
In this method, a random subset of samples is chosen for each iteration (or number of iterations) and the gradients are computed and averaged only for this small subset.
Since the constraint we added~(\ref{eq:evconstraint}) entails multiplying the output of the encoder by the normalized random walk matrix, it is essentially ``mixing" between the outputs of different training points.
This is problematic in mini-batch training, since a point's local neighborhood which most influences the eigenvector constraint will not necessarily be included in the mini-batch.
If extracting only a subset of points, a new matrix needs to be calculated for the mini-batch by extracting the relevant rows and columns, and renormalizing the new matrix. 
In this case, this constraint is reduced to the output being only an approximation of the eigenvector of the full random walk matrix.
However, since the size of training data used in manifold learning is usually not large due to computational complexity constraints, it is unnecessary to train using mini-batches.
In this our setting differs from typical large-scale applications in deep learning such as Imagenet, CIFAR-100, etc. 

\subsection{Computational complexity}
Training is an iterative sequential process, performed offline.
Once the encoder has been trained, out-of-sample extension on new data is calculated with complexity $\mathcal{O}(\sum_{l=1}^{L-1}s_l s_{l+1})$ where $s_1 = n$ and $s_{L}=d$. 
In our experiments, we used an encoder with two hidden layers so the complexity was of order $\mathcal{O}(ns_2 + s_2s_3 +s_3d)$.
This is opposed to other OOSE methods in which the complexity depends on $m$, the number of training points, where typically, $m \gg n > d$.
For example, the complexity of OOSE using Nystr\"{o}m is $\mathcal{O}$((d+n)m).

An advantage of our method is that there is no need to retain the training points and embedding, once the network has been trained.
Only the  weight matrices and bias vectors of all the layers of the network are necessary.
Thus, our approach requires memory on the order of $\mathcal{O}(\sum_{l=1}^{L-1} s_l s_{l+1})$.
Other methods, on the other hand, require retaining all training points and embeddings.
For Nystr\"{o}m and Geometric Harmonics~\cite{Coifman2006a} this results in memory on order of $\mathcal{O}((d+n)m)$. 
The memory cost of the PCA-based approach in~\cite{Aizenbud2013} is higher, requiring an additional $\mathcal{O}(md^2)$ to save covariance matrices for the embeddings of the training points.
If using a non-Euclidean metric as in~\cite{Singer2008,Talmon2013}, retaining the covariance matrices of the training points leads to an additional memory cost on the order of $\mathcal{O}(mn^2)$.
Thus, our method has a large advantage in applications and systems in which the memory and run-time are limited.

\section{Bounding the Out-of-sample-extension Error}
\label{sec:lemma}
In this section, we provide a theoretical bound on the error rate for approximating eigenfunctions of the Laplacian using a single layer network of sigmoid hidden units.
The full derivation is given in~\ref{app:proof}.
Suppose $\cal{M}$ is a smooth compact d-dimensional submanifold of $\R^n$.  Assume $\cal{M}$ is equipped with a metric $\rho:\cal{M}\times\cal{M}\rightarrow\R^+$ which is locally bi-Lipschitz with respect to the Euclidean metric, meaning $\exists \delta,\epsilon$ such that 
\begin{eqnarray*}
(1-\epsilon)\|x-y\|_2 \le \rho(x,y) \le (1+\epsilon)\|x-y\|_2, & \textnormal{ for } \|x-y\|<\delta.
\end{eqnarray*}

\begin{theorem}\label{thm:convergence}
Let $\cal{M}$ be a smooth compact submanifold of $\R^n$ equipped with a metric which is locally bi-Lipschitz with respect to the Euclidean metric.  Let $B_r$ be a Euclidean ball of radius $r$ such that $\cal{M}\subset$ $B_r$.  Let $\psi$ be an eigenfunction of the Laplacian of $\cal{M}$ with eigenvalue $\lambda$, and let $f:\R^n\rightarrow \R$ be an extension of the eigenfunction to the ambient dimension via
\begin{eqnarray}\label{eq:extender}
f(x) = exp(-\lambda \|x - P_{\cal{M}} x\|_2^2)\psi(P_{\cal{M}} x), & \textnormal{where } P_{\cal{M}} x = \arg\min_{y\in\cal{M}} \|x-y\|_2.
\end{eqnarray}
Then there is a linear combination $f_K$ of $K$ sigmoidal units such that
\begin{eqnarray}
\left(\int (f(x) - f_K(x))^2 dx \right)^{\frac{1}{2}} \le \frac{C}{\sqrt{K}}.
\end{eqnarray}
\end{theorem}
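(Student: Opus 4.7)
The plan is to reduce Theorem~\ref{thm:convergence} to Barron's classical approximation theorem (Barron, 1993), which states that any function $f:\R^n\to\R$ satisfying the Fourier moment condition
\[
 C_f \;=\; \int_{\R^n} \|\omega\|_2\, |\widehat{f}(\omega)|\, d\omega \;<\; \infty
\]
admits, for every $K\ge 1$, an approximant $f_K$ consisting of $K$ sigmoidal hidden units with
$\bigl(\int_{B_r}(f-f_K)^2\,dx\bigr)^{1/2} \le 2rC_f/\sqrt{K}$. Since the statement of the theorem restricts attention to the ball $B_r\supset\cal{M}$, and the Gaussian weight $\exp(-\lambda\|x-P_{\cal{M}}x\|_2^2)$ already makes $f$ integrable on all of $\R^n$, the only real work is to show that $C_f$ is finite and to estimate it in terms of $\lambda$, $n$, $d$, and the geometry of $\cal{M}$.

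The first step is to establish the regularity of the ambient extension $f$ defined in~(\ref{eq:extender}). On the tubular neighborhood $T_\delta(\cal{M})=\{x:\|x-P_{\cal{M}}x\|_2<\delta\}$, the nearest-point projection $P_{\cal{M}}$ is smooth by standard tubular neighborhood theory, and the eigenfunction $\psi$ is smooth on $\cal{M}$ by elliptic regularity of the Laplace--Beltrami operator. Hence $\psi\circ P_{\cal{M}}$ is smooth on $T_\delta(\cal{M})$ with derivatives controlled by $\|\psi\|_{C^k(\cal{M})}$ and the curvature of $\cal{M}$. Outside $T_\delta(\cal{M})$ the Gaussian factor $\exp(-\lambda\|x-P_{\cal{M}}x\|_2^2)$ is bounded by $e^{-\lambda\delta^2}$ and decays as a Gaussian in the normal direction, so $f$ extends smoothly and decays like a Gaussian at infinity; combining these two regimes yields a global bound $\|f\|_{C^k(\R^n)}\le M_k$ for each $k$, with $M_k$ depending polynomially on $\lambda$ and on $\|\psi\|_\infty$.

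The second step is to convert this regularity and decay into a Fourier moment bound. Since $f\in L^1(\R^n)$ and has rapidly decaying derivatives, integration by parts gives $|\widehat{f}(\omega)|\le \|\Delta^{\lceil (n+2)/2\rceil} f\|_{L^1}\cdot (1+\|\omega\|_2^2)^{-\lceil (n+2)/2\rceil}$, which is more than enough to guarantee $\int \|\omega\|_2|\widehat{f}(\omega)|\,d\omega<\infty$. A convenient way to carry this out is to write $f$ as a product of a compactly supported or Gaussian profile with $\psi\circ P_{\cal{M}}$ and invoke the algebra property of the Barron space under pointwise multiplication of bounded-Barron-norm functions; the Gaussian factor has an explicit Barron norm of order $\lambda^{n/2}$, while $\psi\circ P_{\cal{M}}$ (after being localized by a cutoff) has Barron norm controlled by its Sobolev norm $\|\psi\|_{H^s}$ with $s>n/2+1$, which is finite by eigenfunction estimates on compact manifolds. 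This gives an explicit, if coarse, bound $C_f\le C(n,d,r,\lambda,\|\psi\|_\infty)$.

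The main obstacle is the middle step: controlling the smoothness of $\psi\circ P_{\cal{M}}$ in $\R^n$ uniformly, since $P_{\cal{M}}$ may fail to be smooth (or even single-valued) at points far from the manifold where the nearest point is not unique. The locally bi-Lipschitz assumption on $\rho$ is what buys us a positive reach for $\cal{M}$ and hence a genuine tubular neighborhood $T_\delta$ on which $P_{\cal{M}}$ is $C^\infty$; outside $T_\delta$, rather than trying to extend $P_{\cal{M}}$ smoothly, the Gaussian weight makes the problematic region contribute only $O(e^{-\lambda\delta^2})$ to every derivative, so one can pre-multiply by a smooth cutoff $\chi$ supported in $T_{\delta}$ at negligible cost and work with the smoothly extended function $\chi\cdot(\psi\circ P_{\cal{M}})\cdot \exp(-\lambda\|x-P_{\cal{M}}x\|_2^2)$. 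Once this reduction is in place, plugging the resulting $C_f$ into Barron's theorem gives the claimed $C/\sqrt{K}$ rate, with $C = 2rC_f$.
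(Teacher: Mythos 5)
Your overall framing --- reduce to Barron's theorem by establishing the Fourier moment condition --- matches the paper's starting point, but the route you take to verify that condition contains a genuine gap. The extension $f$ of \eqref{eq:extender} is generically \emph{discontinuous} across the medial axis of $\cal{M}$ (the set where the nearest-point projection is non-unique): approaching such a point from different sides, $P_{\cal{M}}x$ jumps between distant parts of the manifold and $\psi(P_{\cal{M}}x)$ jumps with it. A jump across a hypersurface forces $\int|\widehat{f}(\omega)||\omega|\,d\omega=\infty$, so $C_f$ cannot be bounded by any amount of interior regularity, and your integration-by-parts / Sobolev-embedding estimates for $\widehat{f}$ are unavailable. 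You correctly identify this obstacle, but your fix does not close it: multiplying by a cutoff $\chi$ supported in a tubular neighborhood $T_\delta$ replaces $f$ by $\chi f$ at the cost of an $L^2$ error of size roughly $e^{-\lambda\delta^2/2}\|f\|_2$, and since $\delta$ is capped by the (fixed) reach of $\cal{M}$, this error is a fixed positive constant independent of $K$. The triangle inequality then yields $\|f-f_K\|_2\le \epsilon_0 + C'/\sqrt{K}$ with $\epsilon_0>0$ fixed, which fails to give the claimed $C/\sqrt{K}$ bound once $K$ is large. (The appeal to a ``Barron algebra'' for the product with the Gaussian factor has the same defect: $\exp(-\lambda\|x-P_{\cal{M}}x\|_2^2)$ is a Gaussian in the distance-to-$\cal{M}$ function, which is itself non-smooth on the medial axis.)

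The paper avoids this entirely by never trying to put $f$ itself in the Barron class. It invokes Proposition 11 of Coifman--Lafon, which supplies, for \emph{every} $\delta>0$, a band-limited function $b$ with band $C_{\delta,\psi}\lambda$ and relative $L^2$ error at most $\delta$; for a band-limited $L^2$ function the moment $\int|\widehat{b}(\omega)||\omega|\,d\omega$ is finite by Cauchy--Schwarz alone, with no smoothness of $f$ required. Barron's theorem is then applied to $b$, and the choice $\delta=1/K$ makes the $\|f-b\|_2$ term decay at the same $1/\sqrt{K}$ rate as the sigmoidal approximation error. To salvage your approach you would need an approximation of $f$ by smooth (or band-limited) functions whose error can be driven to zero as $K$ grows --- which is exactly the role the Coifman--Lafon lemma plays in the paper's argument.
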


\begin{corollary}\label{corr:convergence}
Under the same conditions as Theorem \ref{thm:convergence}, let $\psi_1, ..., \psi_d$ be eigenfunctions of the Laplacian and $f_i$ be the extension of $\psi_i$.  Let $f(x) = (f_1(x), ..., f_d(x))$.  Then there exists a single hidden layer network with $Kd$ sigmoidal units and output $o(x)\in\R^d$ such that 
\begin{eqnarray}
\label{eq:extender2}
\left(\int (f(x) - o(x))^2 dx \right)^{\frac{1}{2}} \le \frac{C}{\sqrt{K}}.
\end{eqnarray}
\end{corollary}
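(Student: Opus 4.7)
The plan is to reduce the corollary to $d$ parallel applications of Theorem~\ref{thm:convergence}, exploiting the fact that a single hidden layer network with a vector-valued output is simply a union of single-output networks that share the hidden layer. Since Theorem~\ref{thm:convergence} already supplies, for each Laplacian eigenfunction $\psi_i$ with eigenvalue $\lambda_i$, a linear combination $f_{i,K}$ of $K$ sigmoidal units approximating the ambient extension $f_i$ (as defined in~(\ref{eq:extender}) with $\psi$ and $\lambda$ replaced by $\psi_i$ and $\lambda_i$) with $L^2$ error bounded by $C_i/\sqrt{K}$, the remaining work is just bookkeeping.

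The construction goes as follows. First, I would invoke Theorem~\ref{thm:convergence} once for each $i=1,\ldots,d$, producing $K$ sigmoidal units $\sigma(w_{i,k}^\top x + b_{i,k})$ and scalar output weights $c_{i,k}$ such that $f_{i,K}(x) = \sum_{k=1}^K c_{i,k}\,\sigma(w_{i,k}^\top x + b_{i,k})$ satisfies the stated $L^2$ bound. Next, I would form one hidden layer containing all $Kd$ units $\{\sigma(w_{i,k}^\top x + b_{i,k}) : 1\le i\le d,\,1\le k\le K\}$, and define the $d\times Kd$ output weight matrix $W^{\mathrm{out}}$ by placing $c_{i,k}$ in the entry linking the $(i,k)$-th hidden unit to the $i$-th output coordinate, and zeros elsewhere. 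This yields a single hidden layer network whose $i$-th output coordinate is exactly $f_{i,K}(x)$.

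The error bound then follows from the Pythagorean identity under the Euclidean norm. Writing $o(x) = (f_{1,K}(x),\ldots,f_{d,K}(x))$, one has
\begin{equation*}
\int \|f(x) - o(x)\|_2^2\, dx \;=\; \sum_{i=1}^d \int (f_i(x) - f_{i,K}(x))^2\, dx \;\le\; \sum_{i=1}^d \frac{C_i^2}{K} \;\le\; \frac{d\,\max_i C_i^2}{K},
\end{equation*}
so taking square roots gives the bound~(\ref{eq:extender2}) with a new constant $C = \sqrt{d}\,\max_i C_i$ that absorbs the factor of $\sqrt{d}$.

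There is no real analytical obstacle here beyond what is already handled in the proof of Theorem~\ref{thm:convergence}; the only thing to be careful about is the implicit domain of integration (the theorem statement integrates over some region containing $\cal M$, e.g.\ a Euclidean ball $B_r$), and one should verify that the same domain is used for all $d$ eigenfunction extensions so that the sum-of-squares step above is legitimate. The other mild subtlety is that the constant $C$ in the corollary depends on $d$ and on $\max_i C_i$, which in turn inherits dependence on the eigenvalues $\lambda_i$ and the Barron-type norms of the extended eigenfunctions $f_i$; this should be remarked on but does not affect the stated $K^{-1/2}$ rate.
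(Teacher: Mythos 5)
Your argument is correct and is precisely the intended (and essentially only) route: the paper leaves Corollary~\ref{corr:convergence} unproved as an immediate consequence of Theorem~\ref{thm:convergence}, obtained by applying the theorem coordinatewise, concatenating the $d$ groups of $K$ hidden units into one layer with a block-structured output matrix, and summing the squared $L^2$ errors. Your added remarks on the common domain of integration and the $\sqrt{d}\max_i C_i$ dependence of the constant are sound and slightly more careful than the paper itself.
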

Note that Theorem~\ref{thm:convergence} guarantees existence of a solution, yet since the optimization problem is non convex, it does not guarantee convergence.

\section{Experimental Results}
\label{sec:results}
In this section, we present experimental results for several toy problems and real image data.
We demonstrate the performance of the encoder and decoder separately and then joining them in an autoencoder.
We evaluate the effect of the eigenvector constraint and demonstrate how it improves the performance of our system, especially in noisy scenarios.
Finally, the autoencoder is successfully used in outlier detection in images. 
This demonstrates that our solution not only performs OOSE, but also means by which to verify that new data agrees with the model inferred from the training data.
 
\subsection{Encoder}
\label{res:encoder}
Our first example is a closed 3-D curve parametrized by:
\begin{equation}
\label{eq:curve3d}
x_i = \left( \cos(\theta_i), \sin(2\theta_i) , \sin(3\theta_i)\right)^T, \;\;\; \theta_i \in (0,2\pi).    
\end{equation}
We examine the effect of adding the eigenvector constraint~(\ref{eq:evconstraint}) to the encoder cost function~(\ref{eq:encoderEV}) for this toy problem.
We address both the effect of the architecture of the network, i.e. number of layers and units, and the effect of noise in the data.
We add white Gaussian noise $\nu_i$ with standard deviation (std.) $\sigma_\nu=0.05$ to the data:
\begin{equation*}
\label{eq:noisyCurve3d}
\tilde{x}_i = x_i + \nu_i,\;\;\; \nu \sim \mathcal{N}(0,\sigma_\nu^2\mybf{I}_{3 \times 3})    
\end{equation*}
and sample 2000 points from this noisy curve. 
The 2D diffusion map of these points is a smooth circle (Fig.~\ref{fig:noisy_data}). 
\begin{figure}[th]
\centering
	\includegraphics[width=0.5\linewidth]{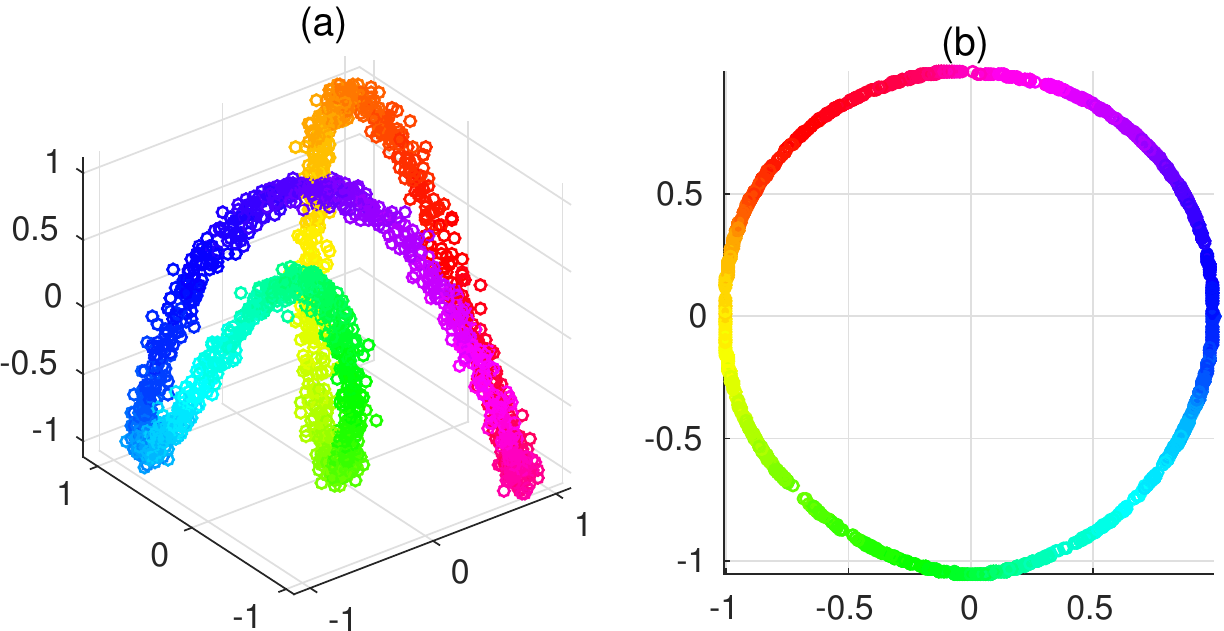}
	\caption{(a) 3D closed curve. (b) First two coordinates of the diffusion map.
	Points colored by $\theta_i$.}
		\label{fig:noisy_data}
\end{figure}

\begin{figure}[th]
\centering
\begin{subfigure}[b]{0.45\textwidth}
	\includegraphics[height=2in]{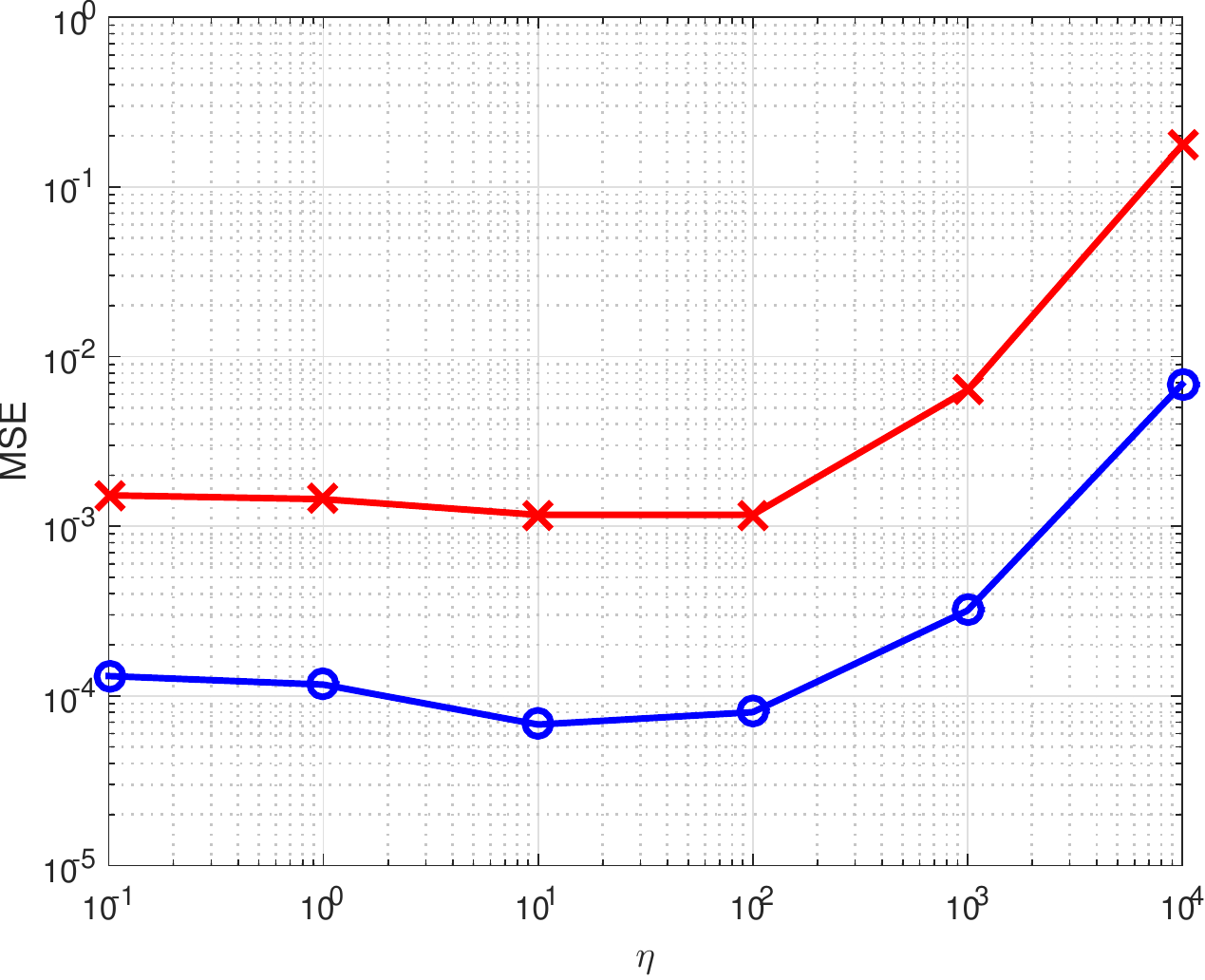}
	\caption{}
		\label{fig:dm_mse_1vs2}
	\end{subfigure}
	\begin{subfigure}[b]{0.45\textwidth}
	\includegraphics[height=2in]{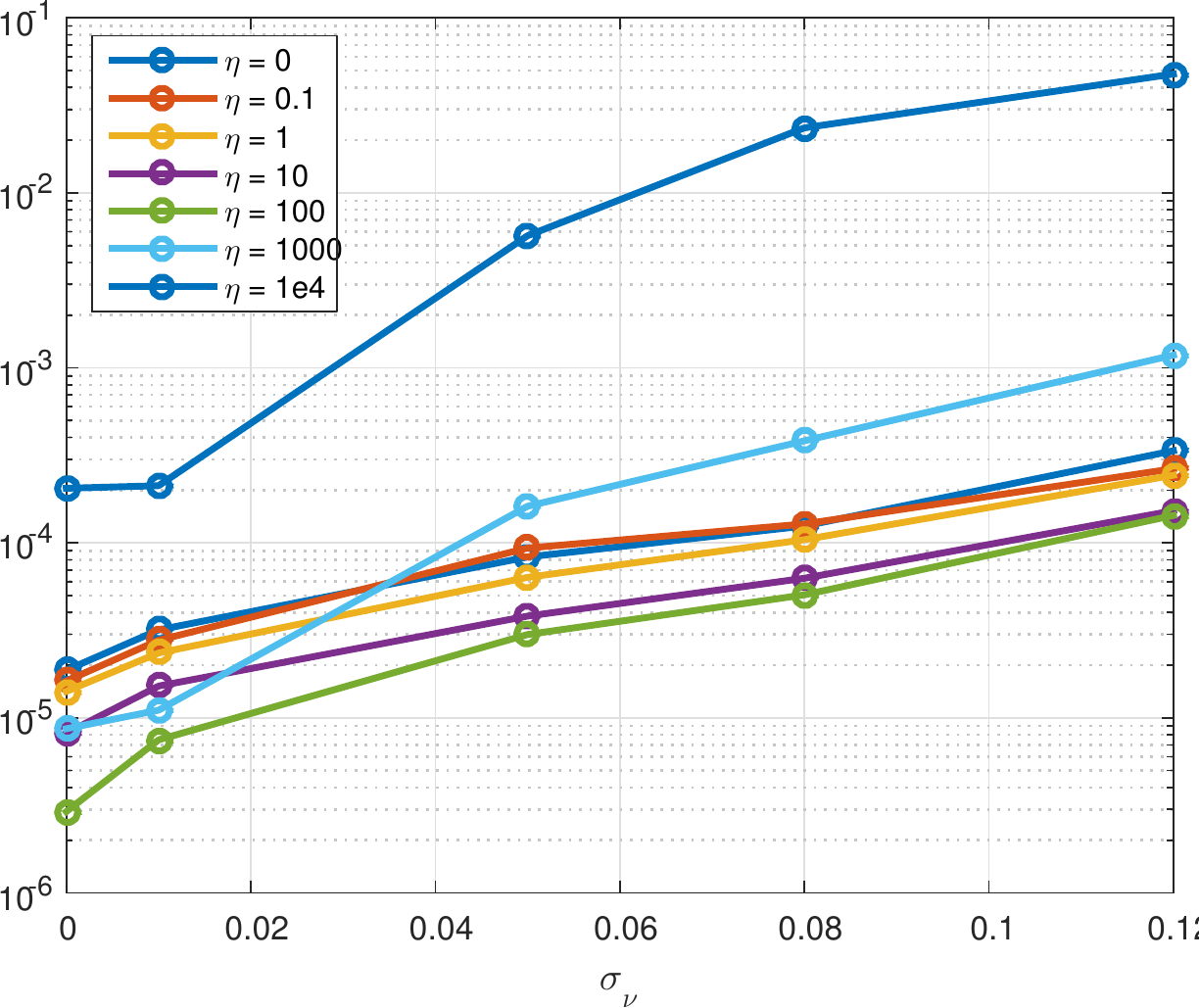}
	\caption{}
		\label{fig:dm_mse_nu}
				\end{subfigure}
	\caption{(a) MSE of the encoder for the training data, comparing 1 hidden layer (red 'x') vs. 2 hidden layers (blue circle) for varying $\eta$ values. (b) MSE of the encoder for the training data, comparing various $\eta$ values.}
	\label{fig:dm_mse}
\end{figure}

Since there is a great deal of randomness in the optimization, we trained encoders for 10 realizations of the data, and averaged the MSE over the realizations.
The encoder MSE for a single realization of the data is calculated as
\begin{equation*}
e^2 = \frac{1}{2m}\sum_{i=1}^m  \Vert o^e(x_i) - \Psi(i) \Vert^2.
\end{equation*}
Figure~\ref{fig:dm_mse}(a) plots the MSE for various values of $\eta$ for an encoder with 1 hidden layer and with 2 hidden layers.
In Fig.~\ref{fig:noisy_eta}, we plot several results from this experiment.
The top row displays the output of the encoder with one hidden layer and the bottom row with two hidden layers. 
Column (a) is the result without the added constraint. 
Columns (b)-(d) are for $\eta=10,\;\; 10^3,\;\; 10^6$ respectively.

A first conclusion from these simulations is that adding depth to the network improves the results.
This follows known observations in deep learning, that functions that can be compactly represented by networks with $L$ layers, can require an exponential number of units in a network with $L - 1$  layers~\cite{Bengio:2009}.
Using one hidden layer, the reconstruction of the embedding is noisy and the added constraint is not enough.
Using two hidden layers, the results are significantly improved and the eigenvector constraint serves to smooth out the noise in the reconstruction. 
In terms of the MSE, adding a layer decreases the error by a factor of 10.
\begin{figure*}[ht]
\centering
	\includegraphics[width=1.0\linewidth]{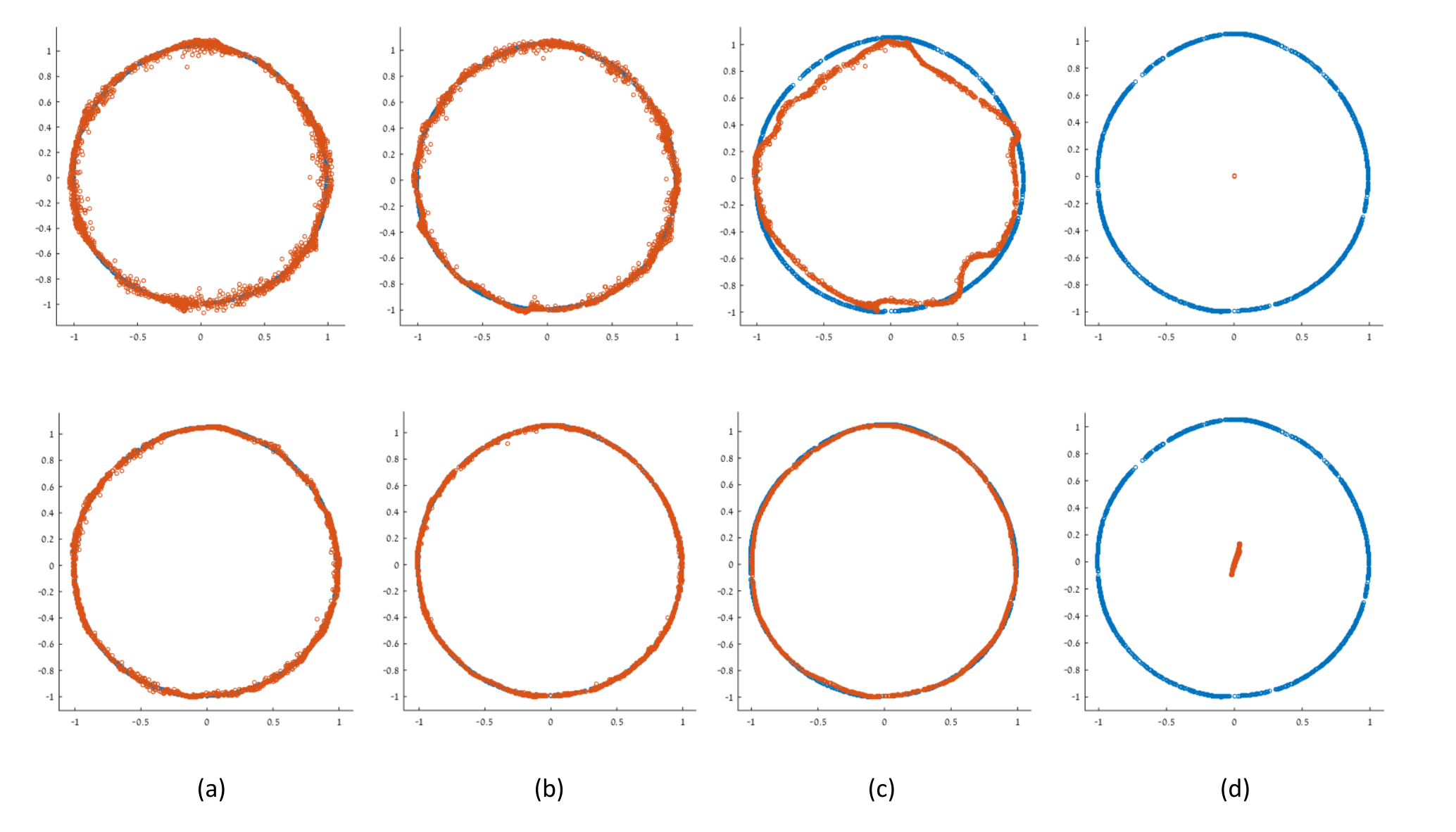}
	\caption{Top row: Encoder with 1 hidden layer, Bottom row: Encoder with 2 hidden layers. Blue plot is original diffusion map, red plot is output of the encoder. The columns examine the effect of the eigenvector constraint for increasing values of $\eta$.
	(a) $\eta=0$, (b) $\eta=10$ ,(c) $\eta=10^3$, (d) $\eta=10^6$. If $\eta$ is too large, the eigenvector constraint dominates the cost function and the output collapses to the trivial solution that all points equal zero.
	For a one hidden layer encoder, the eigenvector constraint is not enough to smooth out the noise and obtain a good embedding.
	For a two layer net, increasing $\eta$ within a reasonable range smooths the noise and yields an improved output compared to not including this constraint.}
		\label{fig:noisy_eta}
\end{figure*}
When the cost parameter is too large as in Fig.~\ref{fig:noisy_eta}(d), this constraint collapses the output of the encoder to the origin, which is the trivial solution to minimizing this constraint.
The optimal results were achieved for $\eta=10-100$.
Not imposing the EV constraint ($\eta=0$) yielded an error that was twice as high and results in a noisy approximation.

To examine the effect of noise on the approximation, we ran simulations with varying values of $\sigma_{\nu}$.
The encoder was trained with 2 hidden layers, with 20 hidden units in each layer.
We averaged the training MSE over 20 realizations of the data, where for each realization several encoders were trained, one for each value of $\eta$ in the EV constraint, displayed in Fig.~\ref{fig:dm_mse}(b). 
The best results are consistently obtained by $\eta=100$, regardless of the level of noise.
Even for no noise ($\sigma_{\nu}=0$), including the EV constraint with proper $\eta$ is meaningful for estimating the diffusion embedding ($\eta=0$ has second or third highest error for all $\sigma_{\nu}$ values).
If is $\eta$ is too high, the solution collapses to zero, resulting in a high error.
For low $\sigma_{\nu}$, $\eta=1000$ performs well so there is some trade-off between the SNR and how high $\eta$ can be.

To examine the dependence of the mean error on the number of units in the single layer encoder, we trained the network with a varying number of hidden units $s_1\in\{5,...,60\}$, for $\eta=100$, given clean data, i.e. $\sigma_{\nu}=0$. 
Figure~\ref{fig:layer1_bound} shows that the empirical mean error $e$ is bounded by $\frac{C}{s_1}$, so that empirically we are achieving a tighter bound than the bound guaranteed in Corollary~\ref{corr:convergence}.
\begin{figure}[ht]
\centering
	\includegraphics[width=0.4\linewidth]{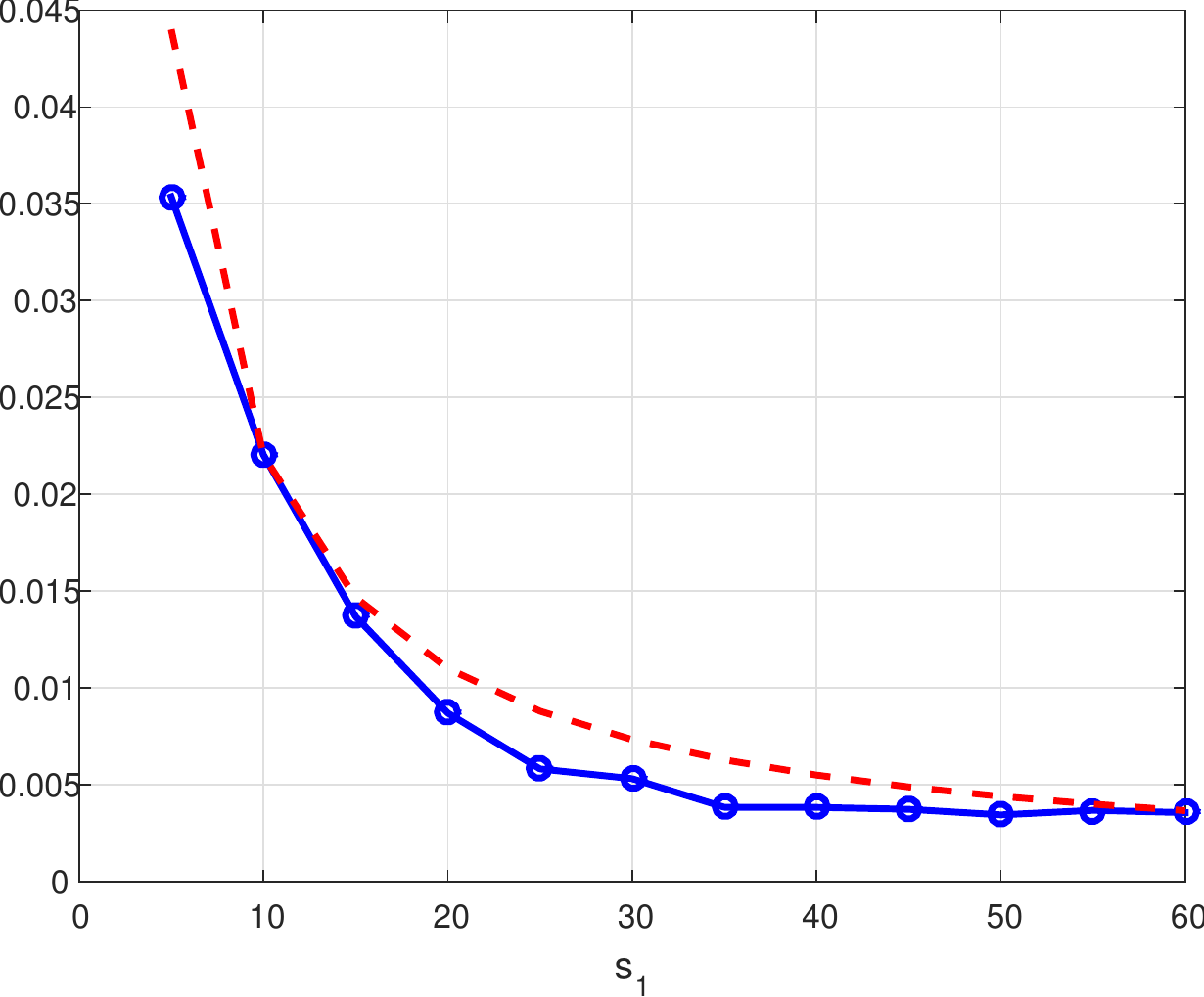}
	\caption{Embedding estimation error vs. number of hidden units $s_1$ in 1-layer encoder (solid line). 
	The empirical error is bounded by $\frac{C}{s_1}$ (dashed line), which is a tighter bound than the theoretical bound $\frac{C}{\sqrt{s_1}}$ guaranteed in Corollary~\ref{corr:convergence}.}
		\label{fig:layer1_bound}
\end{figure}

Regarding the parameters used in the simulations, the number of units in the hidden layers in the simulations was $s_l=20$, unless stated otherwise. 
The cost parameter of the weight regularization term was set as $\mu = 10^{-10}$. 
For the sparsity constraint in pre-training the auto-encoder~(\ref{eq:sparsity}), we tried various values of $\beta$ for $\rho = 0.1$ and they do not affect the errors in any significant manner. Therefore, we set $\beta=1$. 

\subsection{Decoder}
\label{subsec:results_decoder}
The decoder solves the pre-image problem.
It can be used to extend data from the diffusion embedding space to the data space, thus creating new points that lie on the manifold.
This enables a better visualization of the data belonging to the manifold, in the data space.

Our first example is based on the clean version of the 3D curve, given in~(\ref{eq:curve3d}).
We draw 2000 random samples from this curve. 
The diffusion map is calculated with $\sigma=0.1$ for the scale in the kernel~(\ref{eq:kernel}).
The first two diffusion coordinates are a circle, as in the previous example.
We train a decoder using these 2000 samples and their embedding.
We then cover the diffusion space  with points enclosed within a circle extended beyond the radius of the embedding and use the decoder to predict the data in the 3D data space.
Figure~\ref{fig:bubble} top-left and bottom-left display the points in the embedding space colored by angle and radius, respectively. 
The remaining three columns display different views of the predicted points in the 3D data space, where the colors correspond to the color of the points in the diffusion space.
We can see the points are restricted to a 2D surface enclosed within the 3D curve.
Note that the embedding of the origin is handled smoothly, with no singularities in the data space.
Also note that points in the diffusion space which are located beyond the original circle, are decoded in the data space along the boundary of the surface.
Thus, the range of extension is limited and follows the geometry of the original data.
\begin{figure*}[th]
\centering
	\includegraphics[width=1\linewidth]{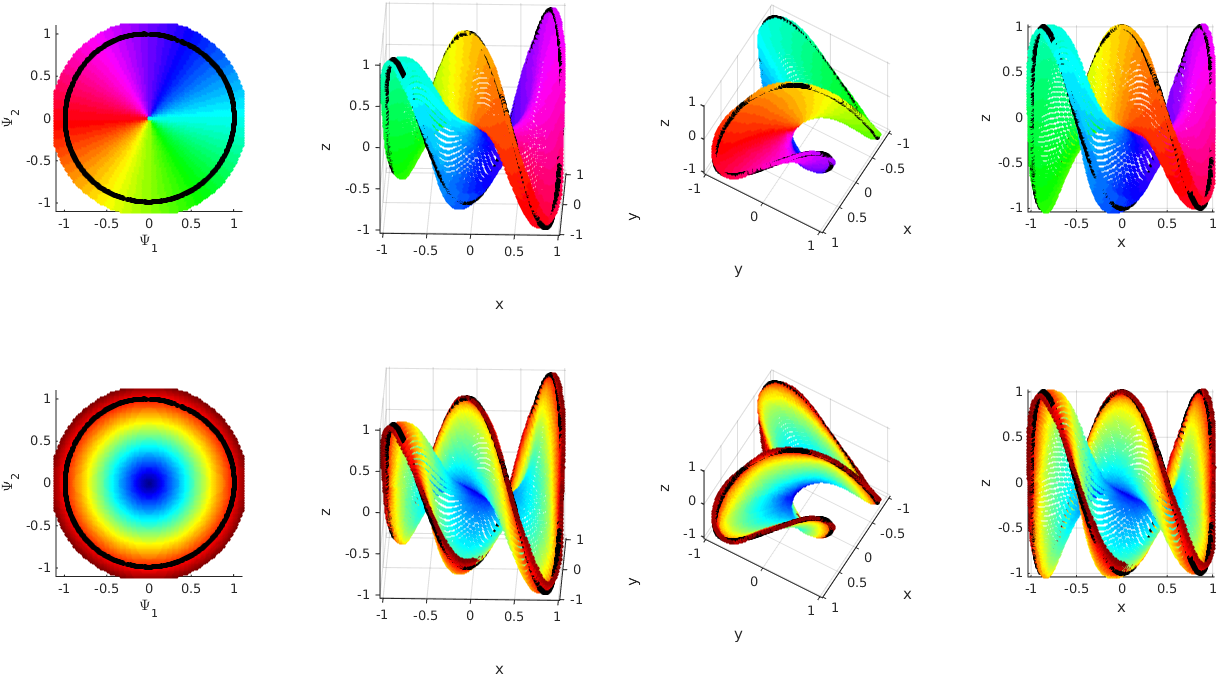}
	\caption{Left column: the diffusion space is covered in points within a circle. The black points are the diffusion coordinates for the training data and are the input points to the decoder. The remaining three columns are different views of the output of the decoder, where the black points are the original training datapoints. Top row: points colored by angle in the diffusion space. Bottom row: points colored by radius.}
	\label{fig:bubble}
\end{figure*}

The next example is of more complex data, whose embedding is also a 2D circle.
Given an image of a noisy periodic function (std. of the noise is 25), shown in Fig.~\ref{fig:periodic_im}(a), we extract all overlapping patches sized $5 \times 5$ pixels, and construct a random walk on the patches, to obtain the diffusion map $\Psi$.
Figure~\ref{fig:periodic_im}(b) displays examples of 49 patches extracted from the image.
The first two coordinates of the diffusion embedding are a circle shown in Fig.~\ref{fig:periodic_im}(c).
Training the a decoder with two hidden layers for the diffusion map and the patches, yields a function that ``produces" image patches.
\begin{figure}[t]
\centering
	\includegraphics[width=0.9\linewidth]{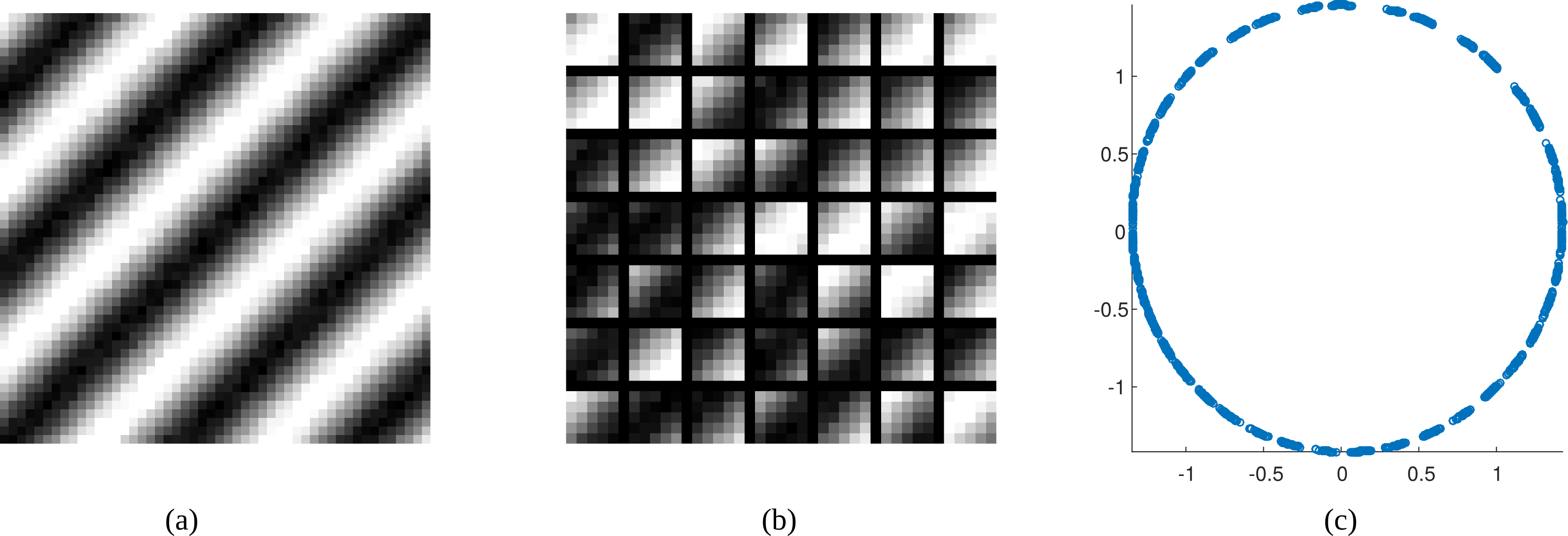}
	\caption{(a) Image of noisy periodic function. (b) Example of 49 training patches extracted from the image. (c) Diffusion map of training image.}
		\label{fig:periodic_im}
\end{figure}
The input dimension is $s_1=2$ and the output dimension is $s_L=25$.
The decoder enables visualization of the data.
Figure~\ref{fig:decoder_image}(a) displays the patches obtained by inserting several test points from the diffusion space into the decoder, where the position of the patches in the circle corresponds to the location of the points entered into the decoder.
This example shows that the radius in the diffusion embedding represents the amplitude of the periodic function in the image patches. 
At the origin of the diffusion space, you get a smooth patch.
Note the patches are clean as opposed to the training patches. 

Figure~\ref{fig:decoder_image}(b) shows how this can be used for image manipulation. 
A rotation of $\pi$ and scale by 0.45 is applied to the diffusion map as
\begin{equation*}
\widetilde{\Psi} = 0.45 \begin{pmatrix}
  -1 & 0  \\
  0 & -1 
 \end{pmatrix}
\Psi.
\end{equation*}
The values of $\widetilde{\Psi}$ are inputted into the decoder. 
Then the output values are used to reconstruct the image where each value is assigned to its original pixel location.
The resulting image is indeed a shift of the original image in Figure~\ref{fig:periodic_im} and the amplitude has been decreased.

\begin{figure}[th]
\centering
\begin{subfigure}[b]{0.4\textwidth}
	\includegraphics[width=\textwidth]{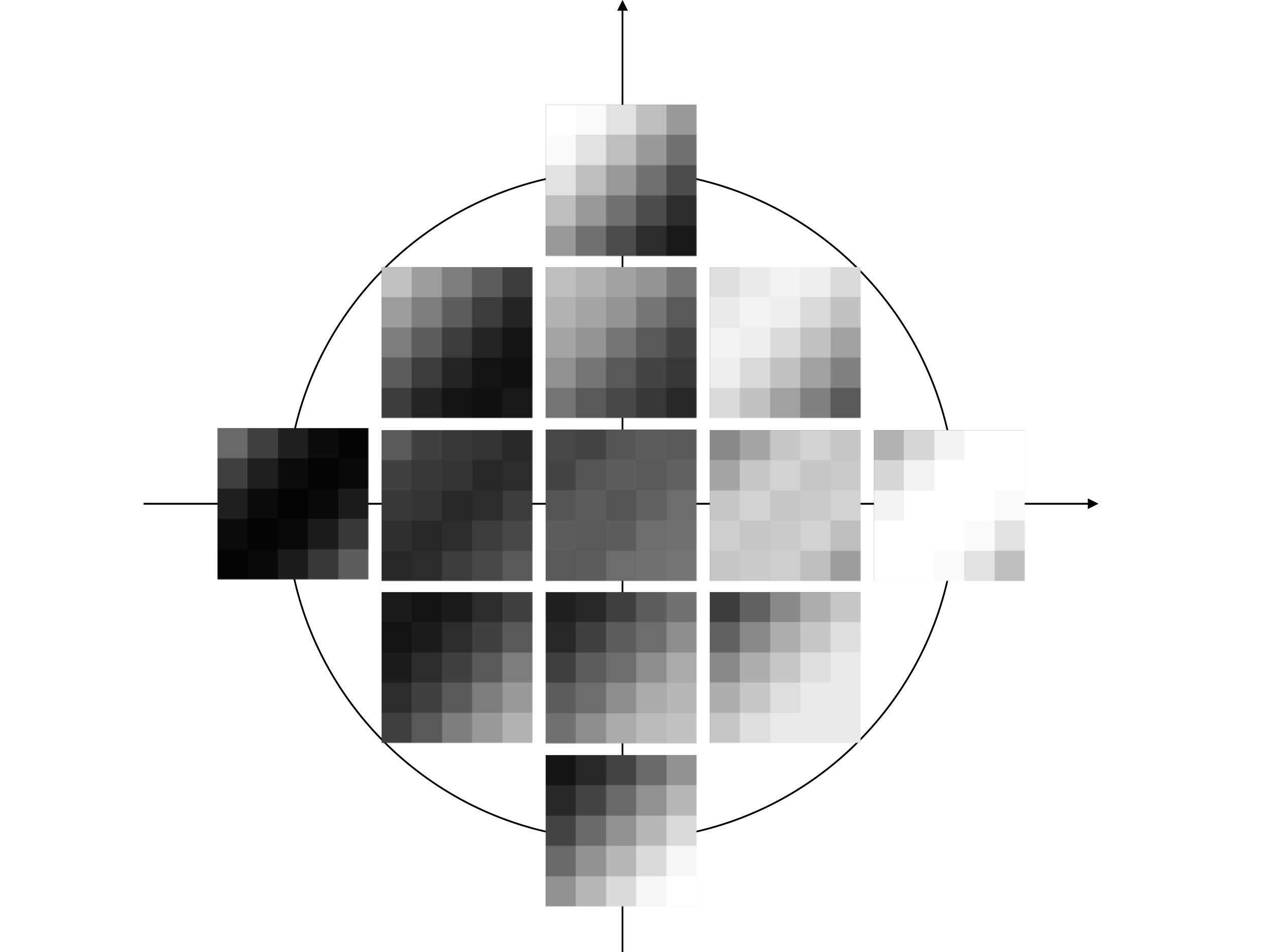}
	\caption{}
		\label{fig:patches}
		\end{subfigure}
	\begin{subfigure}[b]{0.25\textwidth}
	\includegraphics[width=\textwidth]{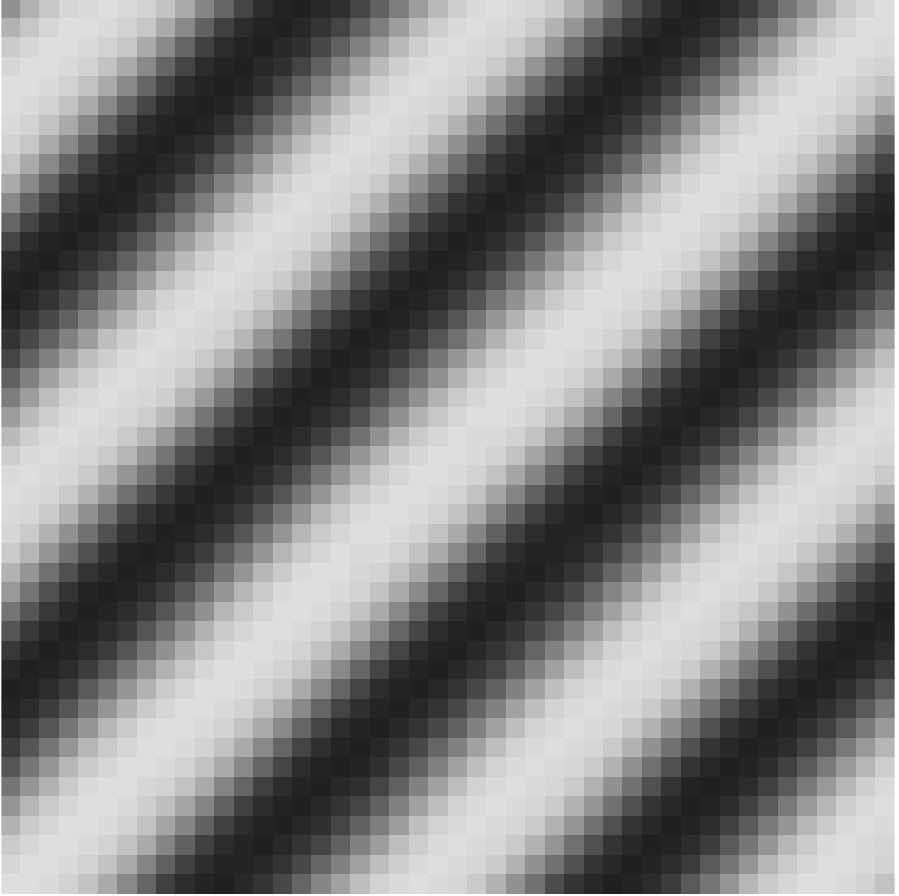}
	\caption{}
		\label{fig:rotIm}
		\end{subfigure}	
	\caption{(a) Decoding image patches. Various points along and within the circle are inputted to the decoder. We display the obtained patches at the locations in the diffusion space that were used an input to the decoder.
	This demonstrates that the radius of the embedding corresponds to the amplitude of the periodic function in the image space.
	(b) Image manipulation: the training diffusion map is rotated by 180 degrees and the values multiplied by 0.45. Inputting these points into the decoder and reconstructing the image from output, shows the period of the image has indeed shifted and the amplitude decreased.}
			\label{fig:decoder_image}
\end{figure}

\subsection{Autoencoder}
We train an autoencoder for the noisy 3D data described in Sec.~\ref{res:encoder}.
First, we train a decoder from the embedding to the noisy data, using either one or two hidden layers, corresponding to what was used in the encoder.
We then stack the decoder on top of the trained encoders, and obtain an autoencoder.
We sample 1000 new test points from the noisy curve, and calculate their reconstruction via the autoencoders.
These training and test phases were repeated for 10 realizations of the data.
For each realization of the data, we train several encoders for different values of $\eta$ and one decoder.
Thus, the MSE of the autoencoder depends on $\eta$ and not on the optimization of the decoder, i.e. a lower cost is due to the encoder since all autoencoders share the same decoder.
We then average the reconstruction MSE over all realizations, for each $\eta$ separately.
The reconstruction error is given by
\begin{equation*}
\epsilon = \frac{1}{m} \sum_{i=1}^m \Vert x_i - r(x_i)\Vert^2.
\end{equation*}
Note that we are comparing the output of the autoencoder to the \emph{clean} data $\{x_i\}_i$, although the autoencoder is trained using the noisy data $\{\tilde{x}_i\}_i$.
This demonstrates the denoising capabilities of the network.
Figure~\ref{fig:data_mse}(a) summarizes the results for various values of $\eta$ for autoencoder with 1 hidden layer, and with 2 hidden layers.
Adding a layer decreases the reconstruction MSE by a factor close to 10.
\begin{figure}[t]
\centering
\begin{subfigure}[b]{0.45\textwidth}
	\includegraphics[height=2in]{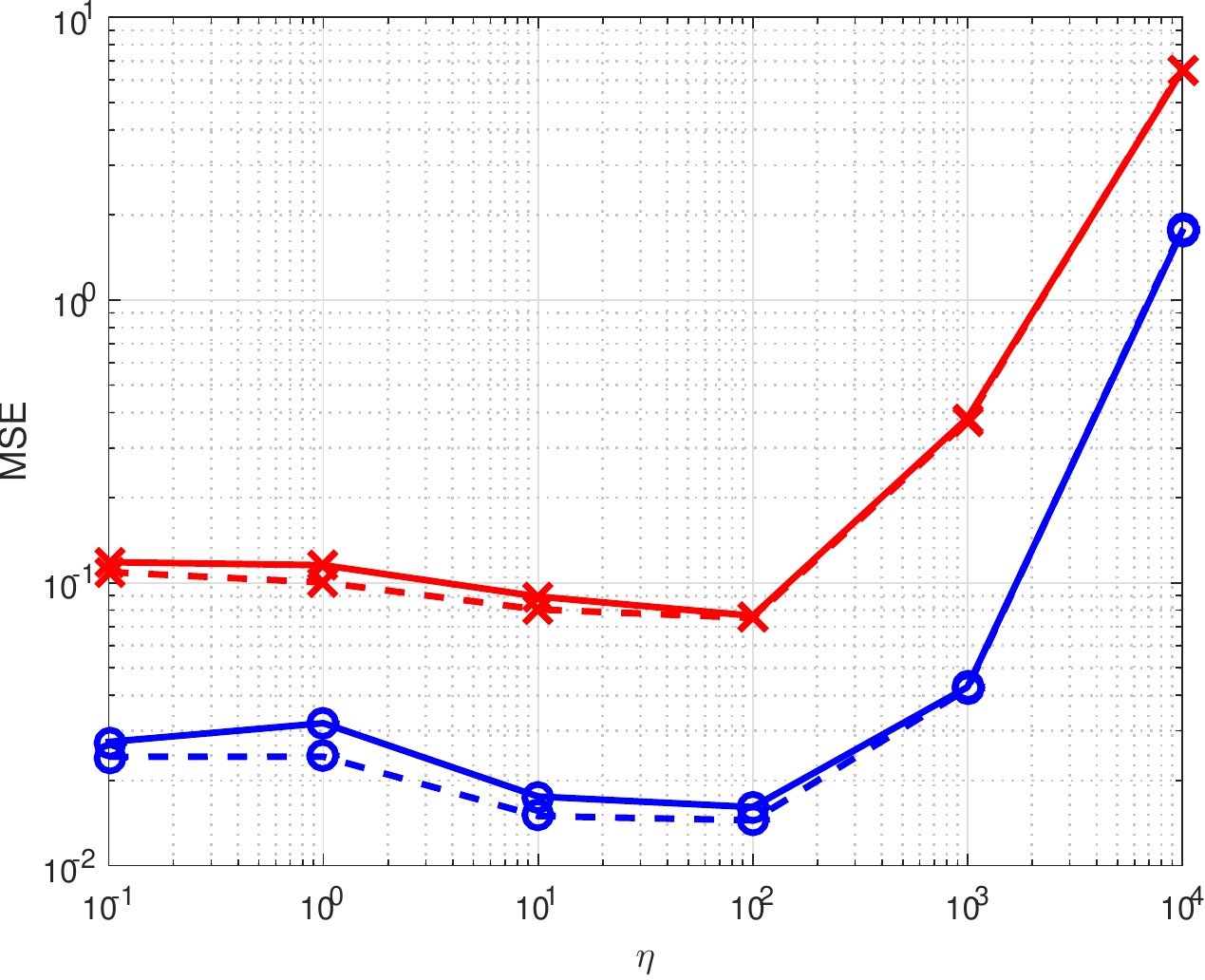}
	\caption{}
		\label{fig:data_mse_1vs2}
		\end{subfigure}
\begin{subfigure}[b]{0.45\textwidth}
	\includegraphics[height=2in]{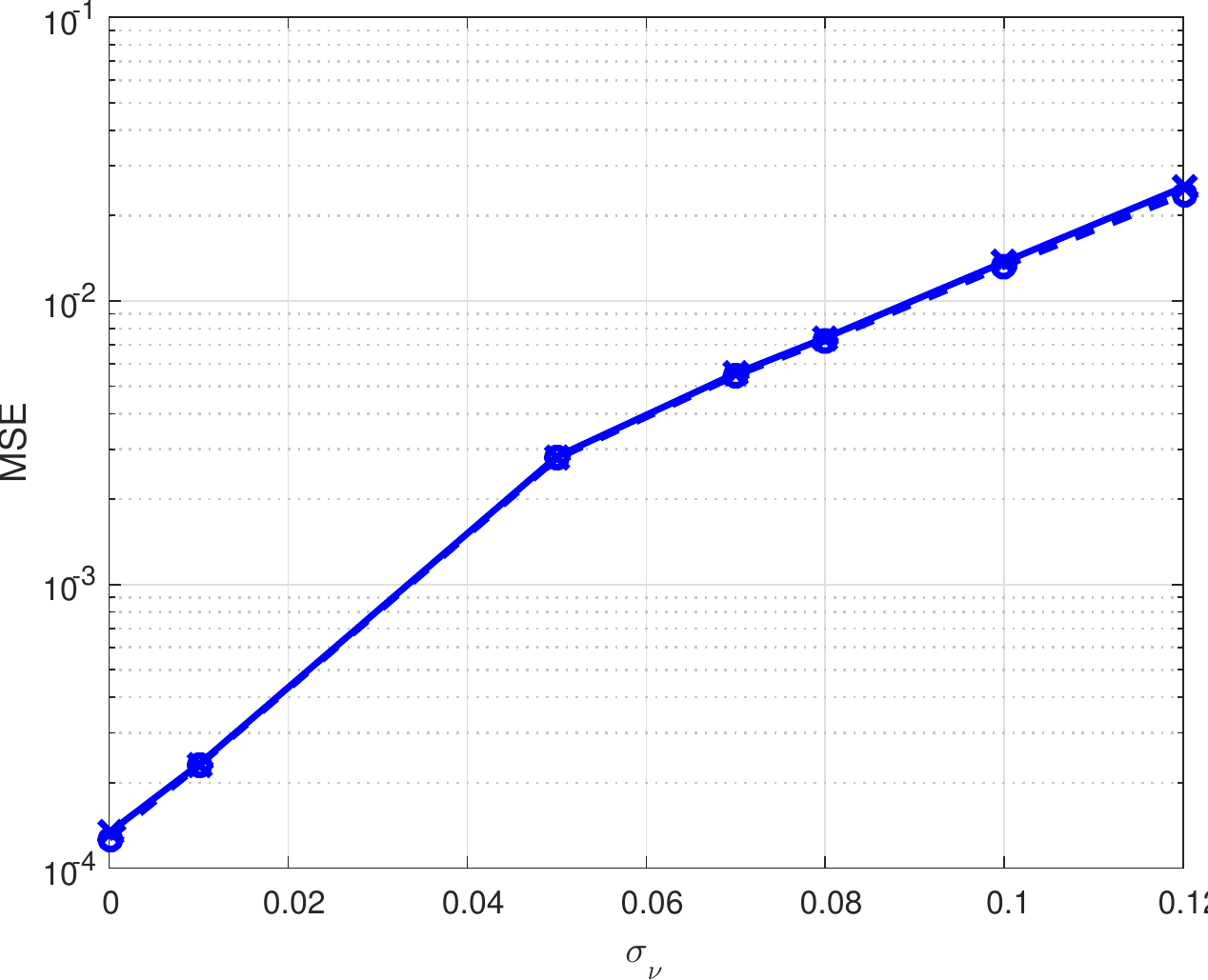}
	\caption{}
		\label{fig:data_mse_nu}
		\end{subfigure}
	\caption{(a) MSE of data reconstruction via the autoencoder, comparing 1 hidden layer (red 'x') vs. 2 hidden layers (blue circle) for varying $\eta$ values. Dashed line is training data, solid line is test data.
	(b) MSE of data reconstruction vs the std. of the noise. Reconstruction is performed via the autoencoder for $\eta=100$ in the EV constraint. Solid line is training error, dashed line is test error.}
	\label{fig:data_mse}
\end{figure}
We can see that in all simulations, the training and test reconstruction errors in the data space are of the same order, so that the algorithm does not over-fit the training data.
In addition, using 2 layers in the network improves both encoder and autoencoder errors. 
In addition, as previously shown, if $\eta$ is too large, for example $\eta=10^6$, this corrupts both the embedding and the data.
The best overall results were achieved for $\eta=100$.

To examine the effect of noise on the diffusion net, we trained the autoencoder for varying values of $\sigma_{\nu}$.
The network was trained with 2 hidden layers in the encoder, and 2 hidden layers in the decoder, with 20 hidden units in each layer. 
Following the experimental results of the encoder, we set $\eta=100$ in the EV constraint.
We average the training and test reconstruction error over 20 realizations of the data, displayed in Fig.~\ref{fig:data_mse}(b). 
The reconstruction error is between the reconstructed output and the original clean data, demonstrating the denoising capabilities of the diffusion net for increasing noise std.
As in the previous example, the training and test errors are very similar, implying that the training does not over-fit the data, both in the encoder and in the decoder, even in increasingly noisy scenarios.

\subsection{Outlier Detection}
We now apply the autoencoder to real image data and demonstrate that the autoencoder performs outlier detection.
As stated in Sec.~\ref{sec:diffnet}, test data will not necessarily follow the model of the data used to calculate the embedding.
OOSE applied to new data that significantly differs from the training data will assign embedding values that do not distinguish it from the data.
Therefore, it is important to be able to determine when the embedding is unreliable.

Figures~\ref{fig:Holes1_ID52}(a) and~\ref{fig:T_shell_beta_ID10}(a) display two images of patterned semiconductor wafers acquired by a scanning electron microscope, sized $200 \times 200$ pixels. 
Both wafer images have a defect near the center of the image.
For each image separately, we randomly extract 2500 patches from the image, sized $8 \times 8$ pixels.
This training set is used to calculate a diffusion map, reducing the data from dimensionality $n=64$ to $d=2$ dimensions.
The training set is used to train an autoencoder, as in Algorithm~\ref{alg:ae}, and the average training reconstruction error $\epsilon$~(\ref{eq:ae_MSE}) is calculated.
We then input all overlapping image patches from the image into the autoencoder and calculate the reconstruction error of each patch.
Figures~\ref{fig:Holes1_ID52}(b) and~\ref{fig:T_shell_beta_ID10}(b) display $\Vert r(x') - x' \Vert / \epsilon$ for all pixels in the image, revealing that this approach easily separates the defects from the background.

This is a result of the diffusion map capturing the main structure of the data, i.e. the pattern of the wafer, as represented in the training data.
Patches which differ from the training set, as in the case of the defects, are not represented in the diffusion map.
Thus, when applying the autoencoder, these patches are not properly reconstructed by the mappings learned from the data space to the diffusion space and back.
This result obtained by the autoencoder indicates that, for these patches, the embedding provided by the encoder does not properly represent them.
\begin{figure}[th]
\centering
\begin{subfigure}[b]{0.4\textwidth}
	\includegraphics[height=4.7cm]{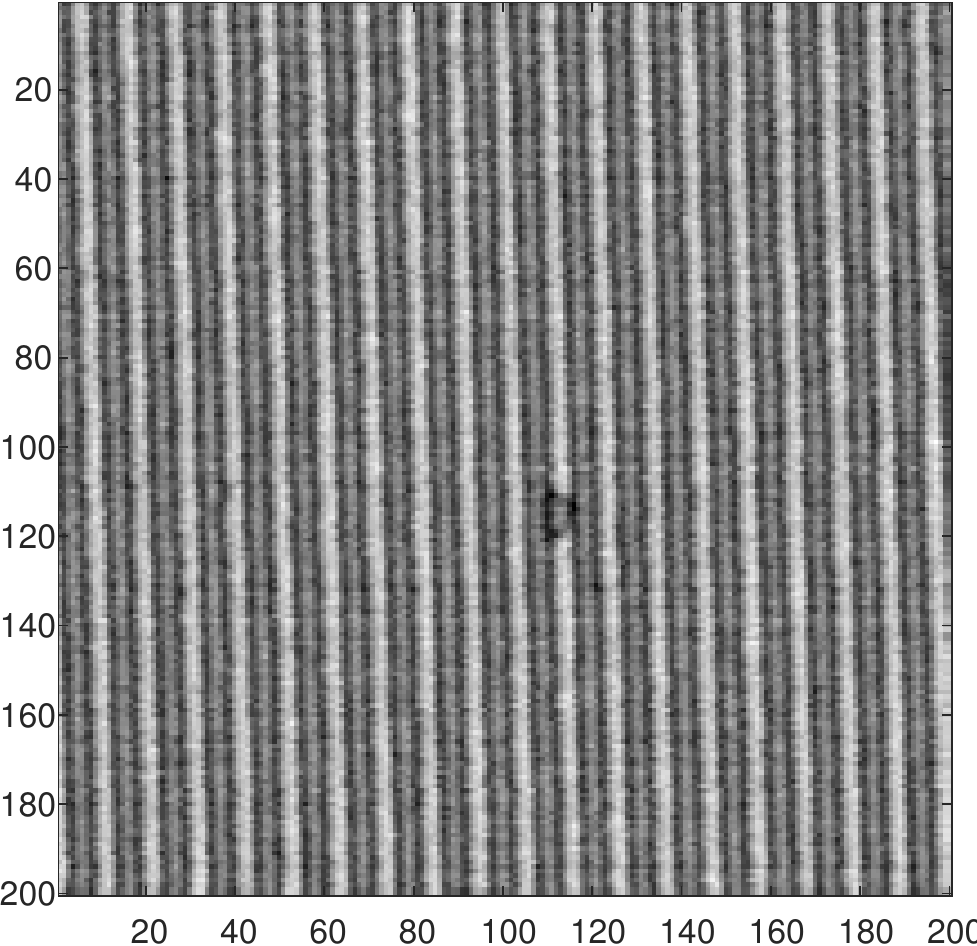}
	\caption{}
		\end{subfigure}
\begin{subfigure}[b]{0.4\textwidth}
	\includegraphics[height=4.75cm]{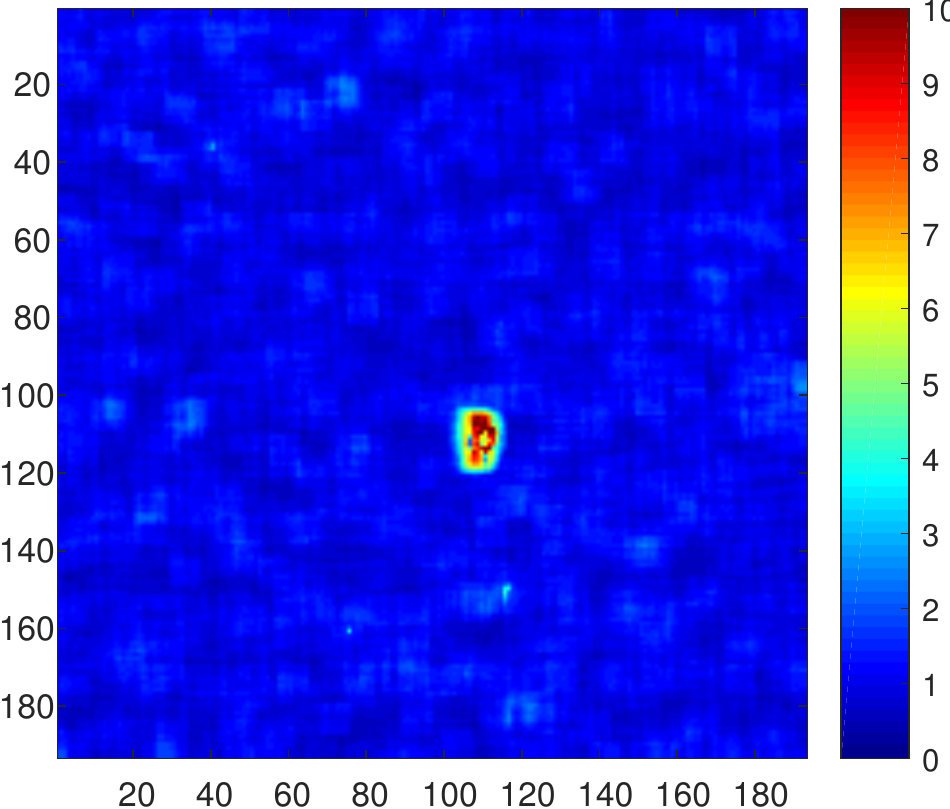}
	\caption{}
		\end{subfigure}
	\caption{(a) SEM image of a semiconductor wafer with defect near the middle of the image. (b) Reconstruction error of the image relative to the average training error: $\Vert r(x') - x' \Vert / \epsilon$. This reveals the wafer defect.}
	\label{fig:Holes1_ID52}
\end{figure}

\begin{figure}[th]
\centering
\begin{subfigure}[b]{0.4\textwidth}
	\includegraphics[height=4.7cm]{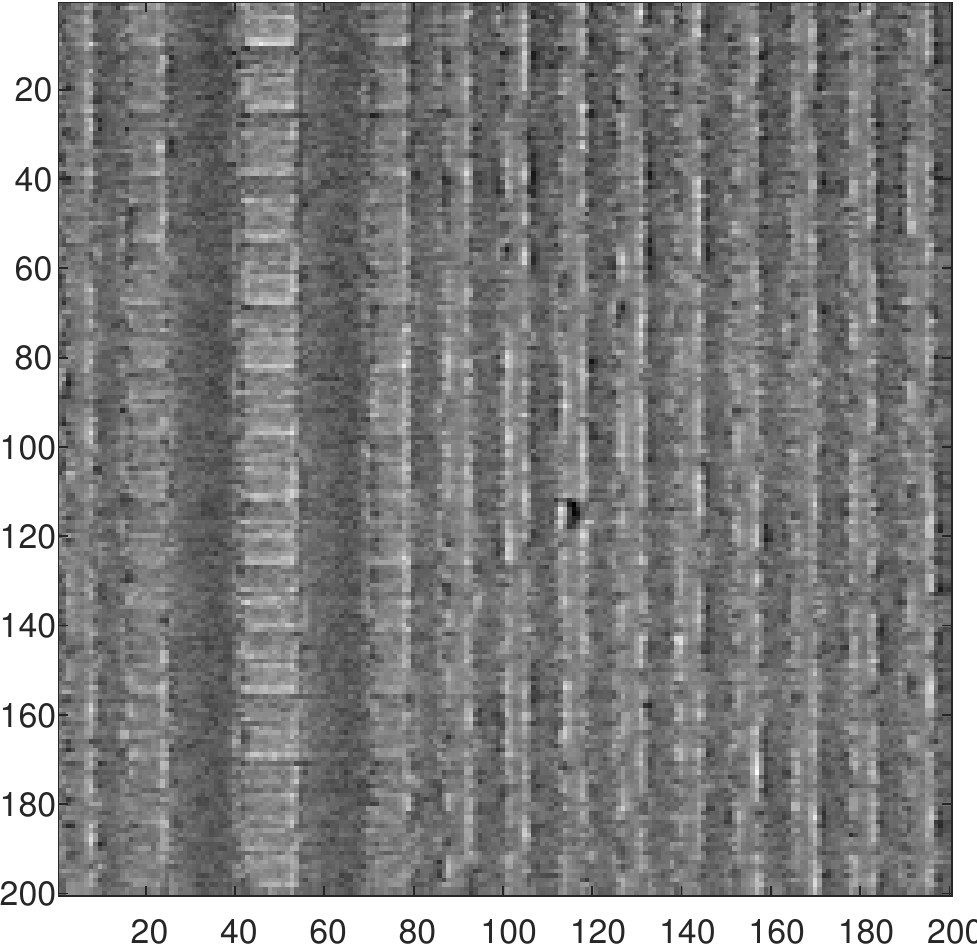}
	\caption{}
				\end{subfigure}
\begin{subfigure}[b]{0.4\textwidth}
	\includegraphics[height=4.765cm]{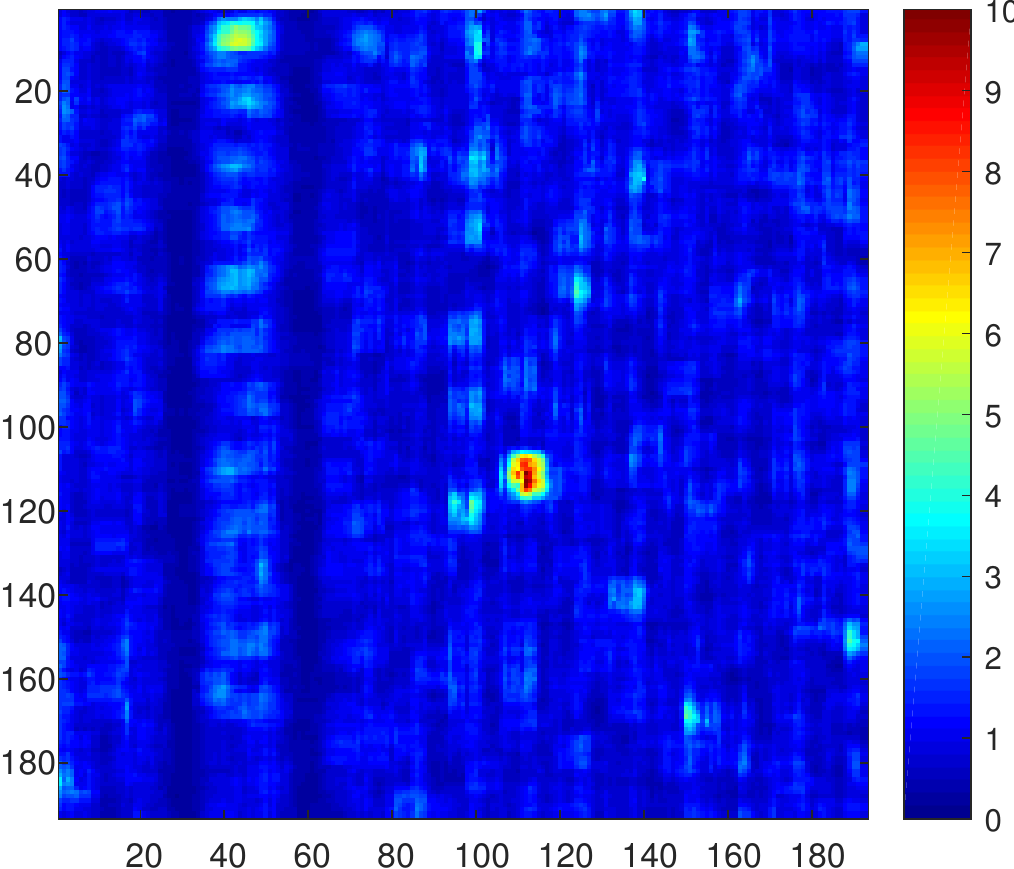}
	\caption{}
		\end{subfigure}
	\caption{(a) SEM image of a semiconductor wafer with defect near the middle of the image. (b) Reconstruction error of the image relative to the average training error: $\Vert r(x') - x' \Vert / \epsilon$. This reveals the wafer defect.}
			\label{fig:T_shell_beta_ID10}
	\end{figure}

\subsection{OOSE Error vs. Variation in In-sample Embedding}
The purpose of an out-of-sample extension algorithm in the manifold learning setting is to provide an extension of the embedding to new points, such that this extension on the new points is close to the embedding of these points, if the embedding was calculated over all the points.
Due to the discrete nature of the data, there is no ``true'' embedding; rather the value of the eigenvectors for a given training point depends on the other points in the set and the scale used in the affinity matrix.
If the eigenvalues of the decomposed matrix have a geometric multiplicity, than the eigenvectors belonging to the same eigenvalue span a subspace, and for the same data they are similar up to a rotation. 
In our experiments in Sec.~\ref{res:encoder}, the data is closed curve, so that its continuous equivalent is the heat equation with Neumann boundary condition.
Thus continuous solutions to this PDE are the trigonometric functions $\sin(\cdot), \cos(\cdot)$.
For our discrete 3D data, the eigenvectors approximate these continuous periodic eigenfunctions.
The first two eigenvectors belong to the same eigenvalue and form a 2D circle in the embedding space.

To demonstrate the performance of our OOSE approach, we conducted an experiment similar to that proposed in~\cite{Bengio2003}.
We took $m=2000$ training points along the curve and calculated the diffusion embedding $\Psi$ for these points.
We then added $n_\textrm{test}$ points to the training points, and calculated am embedding $\widetilde{\Psi}$ for all $m+n_\textrm{test}$ points.
Since the embedding in both cases is a circle, we calculated the rotation between both embeddings using the shared points, i.e. the training points.
This was calculated by~\cite{Coifman:2014} 
\begin{equation*}
S[i,j]=\sum_k \widetilde{\Psi}_i^T(k)*\Psi_j(k), \;\;\;\; k\in\{1,...,2000\} \; i,j={1,2}.
\end{equation*}
The SVD of S is $U \Lambda V^T$.
Then the rotation from $\widetilde{\Psi}$ to $\Psi$ is given by
\begin{equation*}
R=VU^T.
\end{equation*}
We then calculated the error
\begin{equation*}
\epsilon_{n_\textrm{test}}=\frac{1}{m} \sum \Vert \Psi(k)-R\widetilde{\Psi}(k) \Vert, \;\;\;\; k\in\{1,...,2000\} .
\end{equation*}
We calculated this error over 10 realization of the data and for 4 $\sigma_{\nu}$ values.
This error is the variation in the in-sample embedding due to increasing the number of points for which the embedding is calculated.
Figure~\ref{fig:map_vs_encoder} compares this error to the error achieved by our best encoder, trained using $m=2000$ points and $\eta=100$.
\begin{figure}[th]
\centering
	\includegraphics[width=0.5\linewidth]{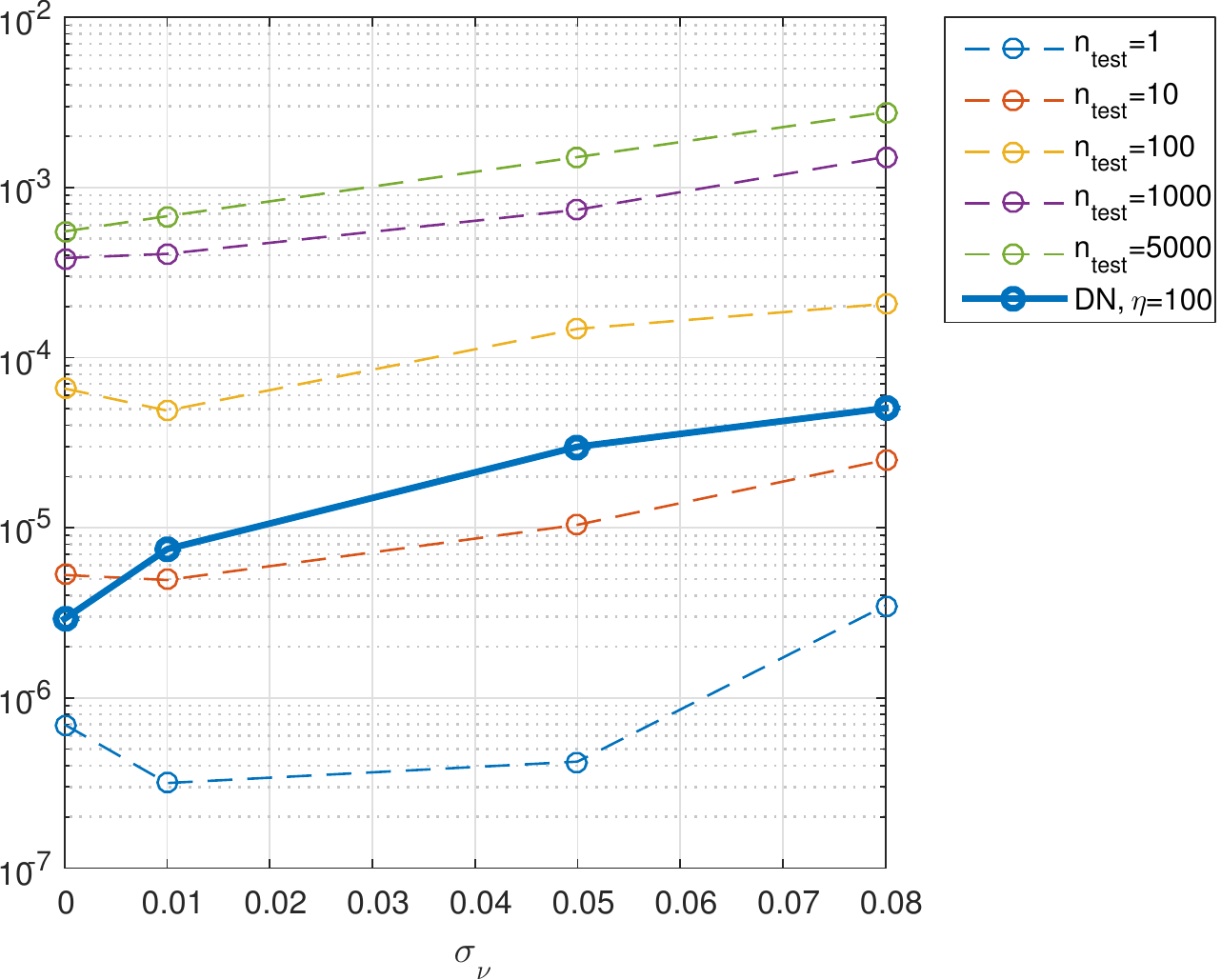}
	\caption{Dashed plot is $\epsilon_{n_\textrm{test}}$, solid line is DN encoder training error.}
		\label{fig:map_vs_encoder}
\end{figure}
For $n_\textrm{test}>100$, the out-of-sample extension error of our encoder is lower than the variation in the in-sample embeddings.
Since, OOSE will typically be performed for $n_\textrm{test}$ much higher than 100, this demonstrates that our method provides a good out-of-sample extension.

\section{Conclusions and Future Work}
\label{sec:future}
We have presented a new framework employing deep learning for manifold learning.
We proposed designing an encoder and decoder that learn the mapping between a given high-dimensional dataset and low-dimensional embedding, and vice-versa.
To this end, we proposed a new constraint in training the encoder, which preserves the locality of the points in the embedding.
We demonstrated empirically that this constraint improves the approximation of the embedding.
Our encoder enables very efficient out-of-sample extension of the non-linear embeddings to new points, with low memory costs. 
The decoder provides a solution to the pre-image problem, enabling data visualization and augmentation.
Finally, stacking the two networks together as a deep autoencoder enables both denoising and outlier detection of the data, as seen via the embedding.
Calculating the reconstruction error of the autoencoder for new points allows to evaluate whether the OOSE provided by the encoder properly represents these new points.
We presented experimental results in noisy scenarios for simulated and real data, demonstrating the properties of the proposed architecture.

Our main focus has been on the encoder for performing out-of-sample extension for data whose distribution follows the distribution of the training data.
However, in different applications, such as sequential signal processing, the nature of the data can change over time.
In manifold learning, the embedding is usually calculated once for training points, and does not adapt over time for new points, as opposed to online dictionary learning in sparse representations, for example.
This could lead to the embedding not providing a good representation of the data as it evolves, and requires re-calculating the embedding again and again.
In future work, we propose to develop a method based on online fine-tuning of the autoencoder that will adapt the embedding to new points which do not fit the model of the training data.
Instead of performing ``regular'' fine-tuning of the autoencoder, constraints can be added that will maintain the middle layer as an approximation of the embedding, as we proposed with the encoder in this work.
In this case, we will fine-tune with both the test and training data, where the training data regularizes the autoencoder so that its middle layer remains an approximation of the embedding for the training points. 
By fine-tuning the network so that it reconstructs the new test data, the middle layer should recover a new embedding for the test data.
This adaptive approach will be explored in future work.

A second direction is to further explore the decoder and how including different regularizations affects the solution of the pre-image problem.
Including a harmonic constraint for example should enable recovering a minimal surface as the example shown in Sec.~\ref{subsec:results_decoder}.
The error rate we provided on the encoder does not apply to the decoder as it requires the function that is being approximated to be band-limited, which does not hold for the decoder in a general case.
In future work on the decoder, we intend to provide a theoretical analysis of the decoder, and to expand our theoretical results to multi-layer nets.
Computing the pre-image is important in different applications in which interpolating the data by averaging in the high-dimensional data space is meaningless, such as the possibility of performing image texture synthesis.
We will analyze datasets in which the high-dimensional data is more complicated and examine how this affects the required complexity of the decoder architecture. 

A third research direction is to examine improving deep learning applications.
Our network enables to determine the number of nodes needed to learn the geometry of the data and can be used to infer the maximal number of nodes needed to model the complexity of the system for an unconstrained neural net.
In addition, we intend to explore whether incorporating our encoder into a deep network will improve deep neural networks. 
This is motivated by previous works that have shown that implicitly incorporating the manifold assumption in the construction of deep networks improves classification results. 
Therefore, we expect that explicitly including the embedding in the network via the encoder should be beneficial.

\section*{Acknowledgments}
This research was supported by the Israel Science Foundation (grant no. 1130/11).
Alexander Cloninger is supported by NSF Award No. DMS-1402254.
The authors would like to thank Ronald Coifman, Ronen Talmon and Roy Lederman for helpful discussions and suggestions.

\appendix
\section{Proof of Theorem \ref{thm:convergence}} 
\label{app:proof}
\begin{proof}

The proof of the theorem requires two key results in the literature, relying on theorems by Barron~\cite{Barron1993} and Coifman and Lafon~\cite{Coifman2006a}.
\begin{theorem}{(Theorem 1 from \cite{Barron1993})}\label{thm:Barron} 
Let $f:\R^n\rightarrow \R$ be a function with bounded first moment of its Fourier transform
\begin{eqnarray*}
\int_{\R^n} |\widehat{f}(\omega)||\omega| d\omega \le C_f.
\end{eqnarray*}
Let $B_r\subset\R^n$ be a Euclidean ball around zero with radius $r$, which we assume contains our data.  Then for every $n\in\N$ there exists a linear combination $f_K$ of $K$ sigmoidal units such that 
\begin{eqnarray}
\left(\int (f(x) - f_K(x))^2 dx \right)^{\frac{1}{2}} \le \frac{C_1}{\sqrt{K}},
\end{eqnarray}
where $C_1 = 2\pi C_f \mu(B_r)$, and $\mu$ is the Lebesuge measure on $\R^n$.
\end{theorem}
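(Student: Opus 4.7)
The plan is to reproduce Barron's original argument, which has three main ingredients: a Fourier-integral representation of $f$ on $B_r$ as a continuous superposition of cosine ridge functions, a probabilistic Maurey/Jones-type convex-hull approximation lemma, and a uniform approximation of cosines by sigmoidal units. Stitched together, these yield a linear combination of $K$ sigmoids with the claimed $O(1/\sqrt K)$ $L^2$-error, and the constant $C_1=2\pi C_f\mu(B_r)$ will fall out of careful bookkeeping.

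First I would apply Fourier inversion together with the polar decomposition $\widehat f(\omega)=|\widehat f(\omega)|e^{i\theta(\omega)}$ to write, for $x\in B_r$, $f(x)-f(0)=\int_{\R^n}\bigl(\cos(\omega\cdot x+\theta(\omega))-\cos\theta(\omega)\bigr)|\widehat f(\omega)|\,d\omega.$ Multiplying and dividing by $|\omega|$ makes $\pi(d\omega)=|\widehat f(\omega)||\omega|\,d\omega/C_f$ a probability measure on $\R^n$, and the identity exhibits $f-f(0)$ as an expectation, hence a point in the closed convex hull, of the ridge functions $g_\omega(x)=C_f\bigl(\cos(\omega\cdot x+\theta(\omega))-\cos\theta(\omega)\bigr)/|\omega|$. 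A one-variable Lipschitz estimate for cosine gives $|g_\omega(x)|\le C_f\|x\|\le C_f r$ on $B_r$, so every $g_\omega$ lies in an $L^2(B_r)$-ball of controlled radius, and the standard accounting produces the prefactor $2\pi C_f\mu(B_r)$ that appears in $C_1$.

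Second, I would invoke the Maurey/Jones convex-hull lemma in Hilbert space: if $h$ lies in the closed convex hull of a set $G\subset L^2(B_r)$ with $\sup_{g\in G}\|g\|\le b$, then for every $K\in\N$ there exist $g_1,\dots,g_K\in G$ and convex weights $\{c_i\}_{i=1}^K$ with $\|h-\sum_i c_ig_i\|_{L^2(B_r)}\le b/\sqrt K$. The proof samples $g_1,\dots,g_K$ i.i.d.\ from the representing measure $\pi$, expands $\E\|h-\tfrac1K\sum_i g_i\|_{L^2(B_r)}^2$ as a variance-type quantity of order $b^2/K$, and extracts a deterministic realization attaining at most the mean. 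Applied with $h=f-f(0)$ and $G=\{g_\omega\}$ this yields a $K$-term cosine-ridge approximation with the desired rate.

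Finally, I would replace each cosine ridge $\cos(\omega\cdot x+\theta)$ by a short linear combination of sigmoids. On the compact interval $[-r|\omega|,r|\omega|]$ the cosine is a uniform limit of step functions, and each Heaviside step $\mathbf{1}_{t\ge s}$ is an $L^\infty$-limit of sigmoids $\sigma(a(t-s))$ as $a\to\infty$; choosing the sigmoid-replacement accuracy fine enough that its aggregated contribution is dominated by $b/\sqrt K$ preserves the overall rate, with $f(0)$ and the additive constants $\cos\theta(\omega_i)$ absorbed into one extra bias unit. The hard part, in my view, is not any single step but the interplay between the last two: the Maurey bound lives naturally in $L^2(B_r)$ while the sigmoid replacement is cleanest in sup-norm, and one must ensure that tightening the ridge-to-sigmoid approximation does not inflate the unit count beyond $K$ nor corrupt the stated constant. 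Tracking the $2\pi$ from the chosen Fourier convention and the $\mu(B_r)$ that emerges when converting sup-norm ridge bounds into $L^2(B_r)$-norms is the only delicate arithmetic.
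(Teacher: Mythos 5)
This statement is quoted in the paper as Theorem~1 of Barron's 1993 paper and is used there as a black box; the paper itself gives no proof, so your proposal can only be judged as a reconstruction of Barron's original argument. On that score you have the right skeleton: the Fourier-inversion identity writing $f(x)-f(0)$ as an integral of cosine ridge functions against $|\widehat f(\omega)|\,d\omega$, the renormalization by $|\omega|$ that turns $|\widehat f(\omega)||\omega|\,d\omega/C_f$ into a probability measure and places $f-f(0)$ in the closed convex hull of ridge functions bounded by $C_f r$, the Maurey/Jones sampling lemma giving the $b/\sqrt{K}$ rate for $K$-term convex combinations, and the passage from cosines to step functions to sigmoids. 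These are exactly Barron's three ingredients.

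There is, however, one genuine structural gap in the order of operations you propose. You apply the Maurey lemma to the \emph{cosine} class first, obtaining $K$ cosine ridges, and only then replace each cosine by ``a short linear combination of sigmoids.'' That produces strictly more than $K$ sigmoidal units, and the number of steps (hence sigmoids) needed to approximate $\cos(\omega\cdot x+\theta)$ uniformly on $[-r|\omega|,r|\omega|]$ to a prescribed accuracy grows with $r|\omega|$, which is unbounded over the support of the representing measure; so the total unit count cannot be controlled this way. You flag this tension yourself but do not resolve it, and ``choosing the replacement accuracy fine enough'' controls the error, not the number of units. Barron's actual argument avoids the problem by reversing the order: he first proves the chain of inclusions $\overline{\mathrm{conv}}(G_{\cos})\subseteq\overline{\mathrm{conv}}(G_{\mathrm{step}})\subseteq\overline{\mathrm{conv}}(G_{\sigma})$ in $L^2$ of the ball (closures cost nothing in unit count), concludes that $f-f(0)$ lies in the closed convex hull of the \emph{sigmoidal} class with the same norm bound, and only then invokes Maurey once to extract exactly $K$ sigmoidal units. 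With that reordering your argument goes through. A minor further remark: converting the sup-norm bound $C_f r$ on the ridges into an $L^2(B_r)$ bound yields a factor $\sqrt{\mu(B_r)}$, not $\mu(B_r)$, so the constant as stated in the paper (and as you promise to recover) does not follow from the bookkeeping you describe; Barron's own constant is $2rC_f$ for a probability measure on $B_r$.
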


Now it suffices to show that the extension function $f$ of $\psi$ from \eqref{eq:extender} has a bounded first moment of its Fourier transform.  To show this, we rely on a second result.

\begin{theorem}{(Proposition 11 from \cite{Coifman2006a})}\label{thm:lafon} 
Let $\cal{M}$ be a submanifold in $\R^n$ and $\psi:\Gamma\rightarrow\R$ be an eigenfunction of its Laplacian with eigenvalue $\lambda$.  Let $\delta>0$ be an approximation level.  Let $f$ be an extension function as in \eqref{eq:extender}.  Then there exists a band limited function $b:\R^n\rightarrow\R$ with band $C_{\delta,\psi}\lambda$ such that 
\begin{eqnarray*}
\frac{\int_{\R^n} (f(x) - b(x))^2 dx}{\int_{\R^n} (f(x))^2 dx} < \delta.
\end{eqnarray*}
\end{theorem}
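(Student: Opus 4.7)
The plan is to reduce Corollary~\ref{corr:convergence} to $d$ applications of Theorem~\ref{thm:convergence} and then stitch the resulting scalar networks together into a single hidden layer of width $Kd$. First, for each coordinate $i\in\{1,\dots,d\}$, Theorem~\ref{thm:convergence} applied to the extended eigenfunction $f_i$ yields a linear combination $f_{i,K}$ of $K$ sigmoidal units satisfying
\[
\left(\int (f_i(x)-f_{i,K}(x))^2\,dx\right)^{1/2} \le \frac{C_i}{\sqrt{K}},
\]
where $C_i$ depends only on $\psi_i$, $\lambda_i$, and $\cal{M}$ through the Barron/Coifman--Lafon constants that appear in the proof of Theorem~\ref{thm:convergence}.

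Second, I would build the single hidden layer network by concatenating the hidden units of the $d$ scalar networks, giving a shared hidden layer of $Kd$ sigmoidal units with the associated input weights and biases. The output layer is a $d\times Kd$ matrix with block-diagonal structure: the weights feeding output coordinate $i$ are exactly the coefficients of $f_{i,K}$ on its own block of $K$ units, and zero on every other unit. The resulting output is $o(x)=(f_{1,K}(x),\dots,f_{d,K}(x))$, which is manifestly produced by one single hidden layer sigmoidal network with $Kd$ hidden units and $d$ linear outputs.

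Third, interpreting $(f(x)-o(x))^2$ as the squared Euclidean norm $\|f(x)-o(x)\|_2^2$ in $\R^d$, the bound follows by linearity of integration:
\[
\int \|f(x)-o(x)\|_2^2\,dx \;=\; \sum_{i=1}^{d}\int (f_i(x)-f_{i,K}(x))^2\,dx \;\le\; \frac{1}{K}\sum_{i=1}^{d} C_i^2 \;=:\; \frac{C^2}{K},
\]
and taking square roots yields $C/\sqrt{K}$ as in \eqref{eq:extender2}.

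There is essentially no hard step: given Theorem~\ref{thm:convergence}, the corollary is bookkeeping. The one subtlety worth being precise about is the architecture claim, namely that the block-diagonal output layer atop a shared $Kd$-unit hidden layer is genuinely a \emph{single} hidden layer network with vector-valued output, not $d$ independent networks placed side by side. It is also worth noting that $Kd$ is only an upper bound on the required width: if the coordinate extensions $f_i$ share Fourier content, one could in principle reuse hidden units across outputs and achieve the same $L^2$ error with fewer units, but the corollary only claims existence of such a network.
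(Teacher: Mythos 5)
Your proposal does not address the statement it is supposed to prove. The statement in question is Theorem~\ref{thm:lafon} (Proposition 11 of Coifman--Lafon), which asserts that the extension $f(x) = e^{-\lambda\|x-P_{\mathcal{M}}x\|_2^2}\psi(P_{\mathcal{M}}x)$ of a Laplacian eigenfunction admits a band-limited approximant $b$ with band $C_{\delta,\psi}\lambda$ whose relative $L^2$ error is below $\delta$. What you have written instead is a proof of Corollary~\ref{corr:convergence}: you take the scalar approximation result of Theorem~\ref{thm:convergence} as given, apply it coordinatewise to $f_1,\dots,f_d$, and assemble the $d$ scalar networks into one single-hidden-layer network of width $Kd$ with a block-diagonal output layer. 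That argument is fine as far as it goes, but it is bookkeeping for a different statement; nowhere do you construct a band-limited function, control its band in terms of $\lambda$, or bound the relative error by $\delta$. In fact your argument sits \emph{downstream} of Theorem~\ref{thm:lafon} in the paper's logical chain: Theorem~\ref{thm:lafon} is one of the two external inputs (together with Barron's theorem) used to prove Theorem~\ref{thm:convergence}, which you then invoke as a black box.

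To actually prove Theorem~\ref{thm:lafon} you would need the geometric harmonics machinery of Coifman--Lafon: one studies the restriction to $\mathcal{M}$ of band-limited functions on $\R^n$ (via prolate-type operators), shows that eigenfunctions of the Laplacian with eigenvalue $\lambda$ lie, up to relative error $\delta$, in the range of the restriction operator for band proportional to $\lambda$, and then pulls the band-limited preimage back as the approximant $b$; the Gaussian factor $e^{-\lambda\|x-P_{\mathcal{M}}x\|_2^2}$ in \eqref{eq:extender} is precisely what makes the extension compatible with that band. The paper itself does not reprove this --- it cites it --- so if your task was to supply a proof, a citation plus a sketch of that restriction argument is what is expected, not the network-stitching argument you gave.
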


We need several more intermediate claims before addressing the result.
\begin{claim}\label{claim:fBounded}
The function $f$ from \eqref{eq:extender} is in $L^2(\R^n)$.
\end{claim}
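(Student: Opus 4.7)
The plan is to exploit the Gaussian decay of $f$ away from $\cal{M}$ together with the compactness of $\cal{M}$. First I would note that since $\cal{M}$ is a smooth compact submanifold and $\psi$ is a Laplace--Beltrami eigenfunction, elliptic regularity gives that $\psi$ is smooth, and compactness then yields a uniform bound $\|\psi\|_\infty =: M < \infty$. Plugging this into \eqref{eq:extender}, the pointwise estimate
\begin{equation*}
|f(x)|^2 \;\le\; M^2 \exp\!\bigl(-2\lambda\, d(x,\cal{M})^2\bigr),
\end{equation*}
where $d(x,\cal{M}) := \|x - P_{\cal{M}} x\|_2$ is the Euclidean distance from $x$ to $\cal{M}$, reduces the problem to controlling how fast the sub-level sets of $d(\cdot,\cal{M})$ grow.

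To do this, I would decompose $\R^n$ into the shells $A_k := \{x\in\R^n : k \le d(x,\cal{M}) < k+1\}$, $k = 0, 1, 2, \ldots$. Since $\cal{M} \subset B_r$ by hypothesis, the triangle inequality forces $A_k \subset B_{r+k+1}$, giving the crude volume bound $\mu(A_k) \le \omega_n (r+k+1)^n$, where $\omega_n$ is the volume of the Euclidean unit ball. Summing,
\begin{equation*}
\int_{\R^n} |f(x)|^2\, dx \;\le\; M^2 \omega_n \sum_{k=0}^{\infty} (r+k+1)^n\, e^{-2\lambda k^2},
\end{equation*}
which converges because the Gaussian factor decays faster than any polynomial grows. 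This yields $f \in L^2(\R^n)$.

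There is essentially no obstacle here, since the exponential cutoff in \eqref{eq:extender} was built in precisely to ensure integrability. The only mildly subtle point is that the Gaussian factor uses the Euclidean distance to $\cal{M}$ through the minimizer $P_{\cal{M}} x$, which is exactly what makes the shell estimate above immediate. A slicker alternative would be to split $\R^n$ into a tubular neighborhood of $\cal{M}$ (where a change of variables turns the integral into a Gaussian in the normal coordinates times a bounded surface integral on $\cal{M}$) and its complement (where the exponential decay trivially wins); but the dyadic-shell argument is shorter and sidesteps any discussion of the reach of $\cal{M}$ or the Jacobian of the normal exponential map.
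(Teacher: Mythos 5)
Your proof is correct, but it follows a different route from the paper's. The paper carries out precisely the ``slicker alternative'' you set aside: it fibers the integral over $\mathcal{M}$, writing $\int_{\R^n} f^2\,dx \le \int_{\mathcal{M}}\bigl(\int_{\R^{n-d}} e^{-2\lambda\|x-x'\|_2^2}dx\bigr)\psi(x')^2\,dx' = C_{n-d,\lambda}\int_{\mathcal{M}}\psi(x')^2\,dx'$, and then invokes the local bi-Lipschitz equivalence of $\rho$ with the Euclidean metric to bound $\int_{\mathcal{M}}\psi^2\,dx'$ by $\frac{1}{1-\epsilon}\int_{\mathcal{M}}\psi^2\,d\rho = \frac{1}{1-\epsilon}$, using the eigenfunction normalization. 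Your shell decomposition $A_k=\{k\le d(x,\mathcal{M})<k+1\}$ with the crude volume bound $\mu(A_k)\le\omega_n(r+k+1)^n$ is more elementary: it genuinely avoids the change of variables onto normal coordinates, whose Jacobian and reach issues the paper's fibering step quietly glosses over, and it needs only $\|\psi\|_\infty<\infty$, which elliptic regularity and compactness supply. What you give up is the form of the constant: the paper's bound $\|f\|_2^2\le C_{n-d,\lambda}/(1-\epsilon)$ is expressed through the $L^2(\rho)$ normalization of $\psi$ and the bi-Lipschitz constant, and this exact expression is reused verbatim in assembling the final constant $C$ of Theorem~\ref{thm:convergence}; your constant instead involves $\|\psi\|_\infty$, which for Laplacian eigenfunctions can grow with $\lambda$ and is therefore less convenient to thread through the rest of the argument. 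Note also that both proofs implicitly require $\lambda>0$ (for $\lambda=0$ the Gaussian factor is constant and $f\notin L^2$), which is consistent with the paper's exclusion of the constant eigenfunction $\psi_0$ from the embedding.
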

\begin{proof}[Proof of Claim \ref{claim:fBounded}]
Since $\rho$ is locally bi-Lipschitz,
\begin{eqnarray*}
\int_{\R^n} f(x)^2 dx &=& \int_{\R^n} e^{-2\lambda \|x - P_{\cal{M}}x\|^2_2} \psi(P_{\cal{M}}x)^2 dx\\
&\le& \int_{\cal{M}} \left( \int_{\R^{n-d}}e^{-2\lambda \|x - x'\|^2_2} dx \right) \psi(x')^2 dx'\\
&=& C_{n-d,\lambda}  \int_{\cal{M}} \psi(x')^2 dx'\\
&\le& \frac{C_{n-d,\lambda} }{1-\epsilon} \int_{\cal{M}} \psi(z)^2 d\rho(z)\\
&=& \frac{C_{n-d,\lambda} }{1-\epsilon} 
\end{eqnarray*}
\end{proof}

\begin{claim}\label{claim:bBounded}
Let $b$ be as in Theorem \ref{thm:lafon}, and $\widehat{b}$ be its Fourier Transform.  Then $b\in L^2(\R^n)$ and 
\begin{eqnarray*}
\int |\widehat{b}(\omega)||\omega|d\omega <\infty.
\end{eqnarray*}
\end{claim}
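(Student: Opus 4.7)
The plan is to establish both assertions using the band-limited nature of $b$ from Theorem~\ref{thm:lafon} combined with Claim~\ref{claim:fBounded}, which already gives $f\in L^2(\R^n)$. The argument is essentially two short applications of Plancherel and Cauchy--Schwarz, so I do not anticipate a real obstacle — the main thing to keep track of is to exploit the compact support of $\widehat{b}$.

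First I would establish that $b\in L^2(\R^n)$. Theorem~\ref{thm:lafon} states that $\int (f(x)-b(x))^2\, dx < \delta \int f(x)^2\, dx$, so $\|f-b\|_2 \le \sqrt{\delta}\,\|f\|_2$. Since Claim~\ref{claim:fBounded} gives $\|f\|_2 <\infty$, the triangle inequality yields $\|b\|_2 \le (1+\sqrt{\delta})\|f\|_2 < \infty$. By Plancherel's theorem, $\widehat{b}\in L^2(\R^n)$ as well, with $\|\widehat{b}\|_2 = \|b\|_2$ (up to a normalization constant that does not affect finiteness).

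Next I would handle the first moment estimate. Because $b$ is band-limited with band $R := C_{\delta,\psi}\lambda$, its Fourier transform $\widehat{b}$ is supported in the Euclidean ball $B_R\subset\R^n$. Therefore
\begin{eqnarray*}
\int_{\R^n} |\widehat{b}(\omega)|\,|\omega|\, d\omega \;=\; \int_{B_R} |\widehat{b}(\omega)|\,|\omega|\, d\omega.
\end{eqnarray*}
Applying the Cauchy--Schwarz inequality,
\begin{eqnarray*}
\int_{B_R} |\widehat{b}(\omega)|\,|\omega|\, d\omega \;\le\; \left(\int_{B_R} |\widehat{b}(\omega)|^2\, d\omega\right)^{1/2} \left(\int_{B_R} |\omega|^2\, d\omega\right)^{1/2}.
\end{eqnarray*}
The first factor is at most $\|\widehat{b}\|_2 = \|b\|_2 <\infty$ by Plancherel and the step above. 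The second factor is the second moment of the uniform measure on $B_R$, which is an explicit finite constant depending only on $n$ and $R = C_{\delta,\psi}\lambda$. Both factors being finite yields the claimed bound $\int |\widehat{b}(\omega)|\,|\omega|\, d\omega < \infty$, completing the proof.
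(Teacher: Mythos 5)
Your proof is correct and follows essentially the same route as the paper's: triangle inequality plus Theorem \ref{thm:lafon} for $\|b\|_2 \le (1+\sqrt{\delta})\|f\|_2$, then restriction to the ball of radius $C_{\delta,\psi}\lambda$, Cauchy--Schwarz, and Plancherel for the first-moment bound. The only difference is that the paper carries out the second-moment integral explicitly to obtain a concrete constant (which it reuses in the final bound of Theorem \ref{thm:convergence}), whereas you stop at finiteness, which is all the claim as stated requires.
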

\begin{proof}[Proof of Claim \ref{claim:bBounded}]
Clearly $b\in L^2(\R^n)$ since 
\begin{eqnarray*}
\|b\|_{L^2} \le \|f\|_{L^2} + \|b-f\|_{L^2}.
\end{eqnarray*}

Because $b$ is band limited with band $C_{\delta,\psi}\lambda$, meaning supp$(\widehat{b})$ is contained inside a ball of radius $C_{\delta,\psi}\lambda$, then
\begin{eqnarray*}
\int |\widehat{b}(\omega)||\omega|d\omega & = & \int_{B_{C_{\delta,\psi}\lambda}} |\widehat{b}(\omega)||\omega|d\omega \\
&\le & \left( \int (\widehat{b}(\omega))^2d\omega \right)^{\frac{1}{2}} \left( \int_{B_{C_{\delta,\psi}\lambda}} |\omega|^2 d\omega\right)^{\frac{1}{2}} \\
&=& \left( \int (b(x))^2 dx \right)^{\frac{1}{2}} \left( \int_{B_{C_{\delta,\psi}\lambda}} |\omega|^2 d\omega\right)^{\frac{1}{2}} \\
&\le& \left( \|f\|_{L^2}  +  \sqrt{\delta}\|f\|_{L^2} \right)  \left( \int_{B_{C_{\delta,\psi}\lambda}} |\omega|^2 d\omega\right)^{\frac{1}{2}} \\
&=&  \left( \|f\|_{L^2}  +  \sqrt{\delta}\|f\|_{L^2} \right) \left( \frac{n}{n+2}(C_{\delta,\psi} \lambda)^2 \mu(B_{C_{\delta,\psi}\lambda}) \right)^{\frac{1}{2}}\\
&=& \left( 1+\sqrt{\delta}\right) \sqrt{\frac{n \mu(B_1)}{n+2}} \left( C_{\delta,\psi} \lambda \right)^{\frac{n+2}{2}} \|f\|_2,
\end{eqnarray*}
where $\mu(B_1)$ is the volume of a ball of radius $1$ in $\R^n$.
\end{proof}

Now we prove Theorem~\ref{thm:convergence}.
By setting $\delta = \frac{1}{K}$ in Theorem \ref{thm:lafon}, we combine the above results to show
\begin{eqnarray*}
\left(\int (f(x) - f_K(x))^2 dx \right)^{\frac{1}{2}} &\le& \left(\int (f(x) - b(x))^2 dx \right)^{\frac{1}{2}} + \left(\int (b(x) - f_K(x))^2 dx \right)^{\frac{1}{2}}\\
&\le& \frac{C}{\sqrt{K}},
\end{eqnarray*}
where 
\begin{eqnarray*}
C &=& \|f\|_2 + C_1 \\
&=& \|f\|_2 + 2\pi \mu(B_r) C_b \\
&\le& \left(1 + 2\pi \mu(B_r) \left( 1+\sqrt{\delta}\right) \sqrt{\frac{n \mu(B_1)}{n+2}} \left( C_{\delta,\psi} \lambda \right)^{\frac{n+2}{2}} \right) \|f\|_2\\
&\le& \left(1 + 2\pi \mu(B_r) \left( 1+\sqrt{\delta}\right) \sqrt{\frac{n \mu(B_1)}{n+2}} \left( C_{\delta,\psi} \lambda \right)^{\frac{n+2}{2}} \right)   \sqrt{\frac{C_{n-d,\lambda} }{1-\epsilon}}. 
\end{eqnarray*}

\end{proof}


\footnotesize{
\bibliographystyle{IEEEtran}
\bibliography{mybib}
}

\vfill\pagebreak
\end{document}